\newenvironment{compactitemize}{
  \begin{list}{\labelitemi}{\leftmargin=1.5em}
    \setlength{\itemsep}{1.5pt}
    \setlength{\parskip}{0pt}
    \setlength{\parsep}{0pt}}{\end{list}}
\newenvironment{compactenumerate}{
  \begin{list}{\labelitemi}{\leftmargin=1.5em}
    \setlength{\itemsep}{1.5pt}
    \setlength{\parskip}{0pt}
    \setlength{\parsep}{0pt}}{\end{list}}
\newcommand{\defeq}{\mbox{$\;\stackrel{\mbox{\tiny\rm def}}{=}\;$}}
\newcommand{\leqa}{\mbox{$\;\stackrel{\mbox{\rm (a)}}{\leq}\;$}}
\newcommand{\leqb}{\mbox{$\;\stackrel{\mbox{\rm (b)}}{\leq}\;$}}
\newcommand{\leqc}{\mbox{$\;\stackrel{\mbox{\rm (c)}}{\leq}\;$}}
\newcommand{\leqd}{\mbox{$\;\stackrel{\mbox{\rm (d)}}{\leq}\;$}}
\newcommand{\sgn}{\text{sgn}}
\newcommand{\Vol}{\text{Vol}}
\newcommand{\abs}[1]{\left| #1\right|}
\def\ignore#1{}
\newtheorem{theorem}{Theorem}
\newtheorem{lemma}[theorem]{Lemma}
\newtheorem{proposition}[theorem]{Proposition}
\newtheorem{definition}{Definition}
\def\cD{\mathcal{D}}
\def\bbR{{\mathbb R}}
\def\bbP{\mathbb{P}}
\def\bbE{\mathbb{E}}
\def\bbm1{\mathbbm{1}}
\def \bbB{\mathbb{B}}
\def\abs{}
\newcommand{\ra} {{\rangle}}
\newcommand{\la} {{\langle}}
\def\cD{\mathcal{D}}
\def\cX{\mathcal{X}}
\def\cR{\mathcal{R}}
\def\01{\{0,1\}}
\def\r01{[0,1]}
\def\Rreg{\cR^{\text{reg}}}
\def\Rx0{\cR}
\def\R01{\cR_{0-1}}
\def\wunderstarreg{w_{*}^{\text{reg}}}
\def\hatR{\hat{\cR}}
\def\hatRreg{\hat{\cR}^{\text{reg}}}
\def\hatwunderstarreg{\hat{w}_*^{reg}}
\def\hatwunderstarminusireg{\hat{w}^{-i,\text{reg}}_{*}}
\def\wdotx{\langle w,x\rangle}
\def\wdotxi{\langle w,x_i\rangle}
\def\wstarLx0lambdasigma{w^{*}_{L,x_0,\lambda,\sigma}}
\title{Local Support Vector Machines: \\Formulation and Analysis}
\author{ Ravi Ganti, Alexander Gray\\
School of Computational Science \& Engineering,
Georgia Tech\\
gmravi2003@gatech.edu, agray@cc.gatech.edu}
\begin{document}
\maketitle
  \begin{abstract}
We provide a formulation for Local Support Vector Machines (LSVMs) that generalizes previous formulations, and brings out the explicit connections to local polynomial learning used in nonparametric estimation literature. We investigate the simplest type of LSVMs called Local Linear Support Vector Machines (LLSVMs). For the first time we establish conditions under which LLSVMs make Bayes consistent predictions at each test point $x_0$. We also establish rates at which the local risk of LLSVMs converges to the minimum value of expected local risk at each point $x_0$. Using stability arguments we establish generalization error bounds for LLSVMs.
  \end{abstract}
  \section{Introduction}
  We consider the problem of binary classification, where we are given a sample $S$ of $n$ i.i.d points, $S=\{(x_1,y_1),\ldots,(x_n,y_n)\} \in (\cX\times \{-1,1\})^n$, $\cX\subset\bbR^d$, and we are required to learn a classifier $g_n:\cX\rightarrow \{-1,+1\}$ using $S$. Binary classification is a well studied problem in machine learning~\cite{hastie2003esl,boucheron2005theory}. One of the simplest classification algorithm is the k-nearest neighbour (kNN) algorithm. The kNN algorithm takes a majority vote over the $k$ nearest neighbours of a test point $x_0$ in order to determine the label of $x_0$. The kNN algorithm, among others, belongs to the class of local learning algorithms that take into account only the local information around the test point $x_0$ in deciding the label of $x_0$. Cortes and Vapnik~\cite{vapnik1998statistical} introduced the celebrated support vector machine (SVM), which learns a decision boundary by maximizing the margin. Zhang et al.~\cite{zhang2006svm}, and Blanzieri~et al.~\cite{blanzieri2006adaptive} independently proposed a classification algorithm called \textit{Local Support Vector  Machines} (LSVMs) and applied it to remote sensing and visual recognition tasks respectively. LSVMs exploit locality, like kNN,  along with the idea of large margin classification, to learn a global non-linear classifier, by learning an SVM locally at each test point. By using a large margin approach LSVMs inherit large margin classifier's robustness to data perturbation. By utilizing only local information, LSVMs avoid relying on the global geometry of the distribution. This is a good strategy when our data lies on a manifold, where the geodesic distance between close points is approximately Euclidean, but the same is not true for points far away. A prime example is the task of image recognition. 

 Segata et al.~\cite{segata2010fast} compared their implementation of approximate LSVM with a standard RBF SVM in LIBSVM for the 2-spirals dataset. The 2-spirals dataset is a 2-dimensional dataset where the data lives on a manifold. The accuracy of LSVM was 88.47\% whereas that of SVM was only 85.29\%, and that of kNN was 88.43\%. 
Another scenario where local learning is more beneficial than global learning is when data is multimodal and/or heterogeneous. For example, suppose given census data we are required to classify people as belonging to the high income group (HIG) or the middle income group (MIG). A global HIG vs MIG classifier might be hard to build since the notion of HIG/MIG changes with states/counties. However, it is better to utilize local information, such as information from a particular county, to build multiple local  classifiers. Our global classifier is then a collection of many such local classifiers.The problem of webspam detection is another example where data is heterogeneous. A page might be webspam for one category but may not be for another. In such cases in order to categorize a webpage as spam or not, it is easier to build multiple local classifiers rather than a single global classifier. A numerical illustration was provided by Cheng et al.~\cite{cheng2009efficient} On a one-vs-all classification problem on the Covtype dataset, their implementation of LSVM registered an accuracy of about 90\%, whereas the accuracy of RBF SVM was only 86.21\% and that of kNN was 67.40\% . Since a one-vs-all classification problem makes the dataset multimodal and heterogeneous (due to grouping of multiple classes as one single class), the superior performance of LSVMs over SVMs illustrates the power of local learning in such settings.  A practical advantage of LSVMs is that they can exploit fast algorithms for range search~\cite{gray_nbody}, and various other approximation techniques~\cite{segata2010fast}, along with parallel computing architectures in order to learn on large datasets. 

% Talk of scenarios where it is useful to do llsvm

%Start talking of contributions.....go
SVMs are well understood both theoretically and practically~\cite{boucheron2005theory, vapnik1998statistical}. However, no theoretical understanding yet exists for LSVMs. The empirical success of LSVMs begs a theoretical understanding of such techniques. Our work is the first attempt to provide a theoretical understanding of LSVMs. Our contributions are as follows:
\vspace{-5pt}
\begin{compactenumerate} 
  \item We provide a  formulation of LSVMs, which generalizes previous formulations due to Blanzieri et al.~\cite{blanzieri2006adaptive}, and Zhang et al.~\cite{zhang2006svm}, and provide the first statistical analysis of locally linear support vector machines (LLSVMs), the simplest kind of LSVMs. Our formulation (Section~\eqref{sec:formulation}) is similar to Cheng et al.~\cite{cheng2009efficient} but  is more directly motivated from local polynomial regression, and hence the role of a smoothing kernel in our formulation, is much more cleaner than their use of an unspecified weight function. Our formulation makes explicit the direct connections to local polynomial fitting via the use of polynomial Mercer kernels. This allows us to view LSVMs as approximating the decision boundary locally using smooth functions, a novel interpretation.

\item In Theorem~\eqref{thm:main} (Section~\eqref{sec:mainthm}) we provide sufficient conditions, which guarantee that, for any given point $x_0$, the prediction of LLSVMs at $x_0$ matches that of the Bayes classifier, establishing pointwise consistency. These are conditions on the distribution and the model parameters $\lambda,\sigma$. 

\item The LLSVM problem at any point $x_0$ minimizes the sample version of the stochastic objective: $\min_{w} \frac{\lambda}{2}||w||^2+\bbE  L(y\langle w,x\rangle)K(x,x_0,\sigma)$. In Theorem~\eqref{thm:stochastic} (Section~\eqref{sec:theory_contrib}) we provide a high probability bound of $\tilde{O}(\frac{1}{\sqrt{n}\lambda\sigma^{2d}})$ for the difference between the smallest value of this stochastic objective and, its value at the solution obtained by solving the LLSVM optimization problem. This result tells us how quickly the stochastic objective, at the solution of the LLSVM problem, converges to its true minimum value. 

\item Define the L risk of any function $f$ as $\bbE L(yf(x))$. In Theorem~\eqref{thm:risk_bound_for_llsvm}  (Section~\eqref{sec:theory_contrib}) we establish an upper bound on the gap between the $L$ risk of the global function learnt by LLSVMs, with bandwidth $\sigma$, and regularization $\lambda$, and the empirical $L$ risk of LLSVMs, via uniform stability bounds. This gap decays as $O(\frac{1}{\sqrt{n\lambda\sigma^d}})$. Notice that while theorems~\eqref{thm:main},\eqref{thm:stochastic} are pointwise results, theorem~\eqref{thm:risk_bound_for_llsvm} involves the global classifier learnt by solving the LLSVM problem at each training point $x_i$ with parameters $\lambda,\sigma$. Hence this is a ``global'' result. Theorems~\eqref{thm:main},~\eqref{thm:risk_bound_for_llsvm} suggest that LLSVMs should work well in low dimensions, or if the data lies in a low-dimensional manifold. This justifies the empirical findings of Segata et al., and Cheng et al.
\end{compactenumerate}
\section{Formulation of LLSVMs and LSVMs.\label{sec:formulation}}  
Our formulation for LLSVMs is directly motivated from a certain technique in nonparametric regression called local linear regression (LLR)~\cite{tsybakov2009introduction}. In LLR one fits a non-linear regression function by fitting a linear function locally at each point. The idea of local linear fit is inspired by the fact that any differentiable function can be well approximated locally via linear functions. Hence LLR locally approximates the underlying regression function with a linear function. LLSVMs adopt a similar approach by making local linear fits to the underlying non-linear decision boundary.  In order to classify an unseen point $x_0$, LLSVMs solve the problem
\vspace{-1pt}
  \begin{equation}
  \label{eqn:llsvm_opt}		
    \textstyle{\hatwunderstarreg=\arg \underset{w}{\min}\quad\frac{\lambda}{2}||w||^2+\frac{1}{n}\sum_{i=1}^n L(y_i\langle w,x_i\rangle)K(x_i,x_0,\sigma)}, 
\vspace{-1pt}
\end{equation}
where $K(x_i,x_0,\sigma)$ is a smoothing kernel with bandwidth $\sigma$ and $L(\cdot)$ is a Lipschitz, convex upper bound to the 0-1 loss. In this paper we will be concerned with the hinge loss  $L(t)\defeq\max\{1-t,0\}$, which is used in SVMs. Some popular examples of smoothing kernels\footnote{Note that smoothing kernels are not the same as the Mercer kernels used in SVM. A popular example of a smoothing kernel that is not a Mercer kernel is the Epanechnikov kernel.} are the Epanechnikov kernel, the rectangular kernel ~\cite{tsybakov2009introduction}.
Replacing the term $L(y_i\la w,x_i\ra)$ in equation~\eqref{eqn:llsvm_opt} with the term $L(y_i\la w,\phi(x_i)\ra)$, where $\phi(x_i)$ is a kernel map induced by a Mercer kernel, we get LSVMs. 
  \begin{equation}
    \label{eqn:lsvm_opt}		
    w^{*}=\arg \underset{w}{\min}\quad\frac{\lambda}{2}||w||^2+\frac{1}{n}\sum_{i=1}^n L(y_i\langle w,\phi(x_i)\rangle)K(x_i,x_0,\sigma). 
  \end{equation}
Our formulation of LSVMs as shown in equation~\eqref{eqn:llsvm_opt} strictly generalizes the formulation of both  Blanzieri~et al. and Zhang et al.. Strictly speaking, their algorithm used a rectangular smoothing kernel with the bandwidth equal to the distance of the $k^{\text{th}}$ nearest neighbour of the test point $x_0$ in the training set. In comparison  our formulation uses a smoothing kernel that allows the formulation to down-weight points in a smooth fashion. The vector $\hatwunderstarreg$ that is learnt by solving the optimization problem~\eqref{eqn:llsvm_opt} is used for classification at $x_0$ only. Hence, unlike linear SVMs, LLSVMs are still non-linear as the linear fits are only local, and the smoothing kernel precisely determines the locality at each $x_0$. To see a simple example of the influence of a smoothing kernel, consider LLSVMs with the hinge loss. Standard primal-dual calculations yield  
\begin{equation}  \label{eqn:primal_dual}
    \hatwunderstarreg=\frac{1}{\lambda}\sum_{i=1}^n \alpha_iy_iK(x_i,x_0,\sigma)x_i,0\leq \alpha_i\leq 1/n
\end{equation}
where $\alpha_i$ are the dual variables.
If one uses a finite tailed smoothing kernel such as an Epanechnikov kernel or a rectangular kernel, then  the points $x_i$ which are outside the bandwidth of the kernel, i.e. $K(x_i,x_0,\sigma)$=0, have no effect on $\hatwunderstarreg$. Hence, the resulting LLSVM does not care about these points and tries to maximize the margin in the input space using points that are close to $x_0$. 

Mercer kernels, that arise out of the kernel map $\phi(\cdot)$, on the other hand have nothing to do with locality. Instead they allow us to fit non-linear functions. If one uses a polynomial Mercer kernel of degree $d$, in conjunction with a smoothing kernel,  then it is equivalent to making local degree $d$ approximations to the boundary function. While such approximations are potentially more powerful than local linear approximations, one would require stronger conditions such as existence of higher order derivatives of the decision boundary, to justify local polynomial approximations. To avoid making such strong assumptions we shall focus on LLSVMs in this paper. The formulation of Cheng et al.~\cite{cheng2009efficient} is similar to the optimization problem ~\eqref{eqn:lsvm_opt}, but uses an unspecified weight function, $\sigma(x_i,x_0)$, in the place of $K(x_i,x_0,\sigma)$. While the importance of the smoothing kernel, and its interaction with Mercer kernels has been distilled in our formulation, the importance and impact of the weight function in the formulation of Cheng et al. was not done clearly.

\textbf{Related Work.} Kernel based rules (KBRs) have been proposed as a nonparametric classification method (see chapter 10 in~\cite{devroye1996ptp}) and are essentially a simplified version of LLSVMs. KBRs predict the label of a point $x_0$ as  $\sgn(\sum_{i=1}^n y_iK(x_i,x_0,\sigma))$. This can be seen as using equation~\eqref{eqn:primal_dual} but with all $\alpha_i$'s set to a constant. However, for LLSVMs these $\alpha$ values themselves depend on the training data, and hence results from the KBRs literature do not transfer to our case. Learning multiple local classifiers has also been done by first clustering the data and then learning a classifier in each of these clusters~\cite{kim2005locally,yan2004discriminant}, or by using a baseline classifier~\cite{dai2006locally,dekelthere} to find regions where the classifier commits errors and then learning a dedicated classifiers for each of these erroneous regions. All these algorithms are different from LLSVMs as they learn a finite mixture of local classifiers from the training data and classify the test point as per the appropriate mixture component. In contrast LLSVMs learn a local classifier on demand for each test point.  Ensemble methods also learn multiple classifiers and combine them to learn a global model. However, the classification model is fixed and does not change from one test point to another.

\textbf{Notation.} $[n]\defeq \{1,\ldots,n\}$. Let $\bbB(x,\sigma)$ denote a $d$ dimensional ball of radius $\sigma$ centered around $x$. Also, let $\langle a,b\rangle\defeq a^Tb$. Throughout the paper we shall use $w\in \bbR^{d+1}$ to denote a vector learned by using the training data set with a 1 appended to each training point in the data set as the $(d+1)^{\text{th}}$ dimension. If $x\in \bbR^d$ then  $\langle w,x\rangle\defeq \langle w,\bar{x} \rangle$ where $\bar{x}\in \bbR^{d+1}\defeq (x,1)$. 
Denote by $f:\cX\rightarrow \bbR$ an arbitary measurable function. Finally since most of our results are ``pointwise results'', we shall use $x_0$ to represent an arbitrary point, and all ``local quantities'' will be defined w.r.t. $x_0$.

\vspace{-5pt}
%Local linear regression~\cite{tsybakov2009introduction}, local linear embedding~\cite{de2003global} are examples of regression and manifold learning methods respecitvely which exploit the concept of local linearity of non-linear functions to efficiently learn globally non-linear functions.
\section{Pointwise Consistency of LLSVMs \label{sec:mainthm}}
We now state the assumptions and our first main result.
\begin{compactitemize}
 \item A0: The domain $\cX\subset \bbR^d$ is compact, $||x||_2\leq M$ for all $x$ in $\cX$, and the marginal distribution on $\cX$ is absolutely continuous w.r.t. the Lebesgue measure. 
\item A1:  Let $C^1$ denote the class of functions that are at least once differentiable on $\cX$. We assume that $\eta(x)\mathop{:=} \bbP[y=1|x]\in C^1$, and as a result  $f_B(x)\defeq 2\eta(x)-1\in C^1$. Such smoothness assumptions (and stronger ones) have been used to study  minimax rates for classification in~\cite{yang1999minimax}. The impact of A1 is two fold. Firstly the minimizer of $L$ risk is a function of $\eta$. For hinge loss this function is $\sgn(2\eta(x)-1)$. The same holds true even for ``local'' versions of $L$  risk and the $0-1$ risk. Since $\eta\in C^1$, one can invoke continuity arguments, to guarantee a small enough radius $\sigma$, where the minimizer of the $L$ risk is a smooth function. Hence, one can restrict the search for an optimal function to $C^1$. The definition of such local quantities is done in Section~\eqref{sec:overview}. 
\item A2: $K(\cdot,\cdot,\cdot)$ is a finite tailed smoothing kernel function that satisfies $K(\cdot,\cdot,\cdot)\geq 0$ (positive kernel), $\int_{x_2\in \bbR^d} K(x_2,x_1,\sigma)~\mathrm{d}x_2=1$ for all $x_1\in \cX$, vanishes for all $ x\notin \bbB(x_1,\sigma)$, and $K(x_1,x_2,\sigma)\leq K_{m}=\Theta(\frac{1}{\sigma^d})$ for all $x_1,x_2\in \bbR^d$. Assumption A2 are standard assumptions from the nonparametric estimation literature~\cite{tsybakov2009introduction}. The finite tail assumption of the smoothing kernel simplifies proofs and should be easy to relax.
\item A3: For all $x_0$ in $\cX$, $\lim_{\sigma\rightarrow 0}\bbE[o(||x-x_0||)K(x,x_0,\sigma)]=0$. A3 allows us, in the limit, to approximate the minimum local $L$ risk using only linear functions, and therefore allows us to model non-linear decision boundaries via locally linear fits.  
\item A4: Let $H_\sigma$ be the region of intersection of a halfspace and $\bbB(x_0,\sigma)$, such that $\frac{\Vol(H_{\sigma})}{\Vol(\bbB(x_0,\sigma))}\geq \frac{1}{2}$. Then a.s. w.r.t. $\cD_{\cX}$, $\lim_{\sigma\rightarrow 0}\inf_{H_\sigma}\bbE_{x\sim\cD_{\cX}} K(x,x_0,\sigma)\bbm1_{H_{\sigma}}=c'_{x_0}>0$. A4 requires that the mass in $\bbB(x_0,\sigma)$ for small $\sigma$ is spread out and is not all located in a small region in $\bbB(x_0,\sigma)$.  
  \end{compactitemize}
\vspace{-9pt}
% Discuss A3
As a simple example, consider the setup where the marginal distribution has uniform density on $[-1,+1]$, and the kernel function is the Epanechnikov kernel.  Under this setting, for any $x_0\in (-1,1)$, we get $\lim_{\sigma\rightarrow 0}\bbE[o(||x-x_0||)K(x,x_0,\sigma)]=\lim_{\sigma\rightarrow 0}\frac{3}{8\sigma}\int_{x_0-\sigma}^{x_0+\sigma}|x-x_0|(1-(\frac{x-x_0}{\sigma})^2)~\mathrm{d}x\leq \lim_{\sigma\rightarrow 0}\frac{3\sigma}{16}=0$. The same result applies even for $x_0\in \{-1,+1\}$. Hence assumption A3 is satisfied. To verify the validity of A4, it is enough to see that $\lim_{\sigma\rightarrow 0}\inf_{\theta\in [0,\sigma]} \frac{3}{4\sigma}\int_{x_0-\sigma}^{x_0+\theta}(1-(\frac{x-x_0}{\sigma})^2)~\mathrm{d}x=1/4$. Hence $c'_{x_0}=1/4$. Finally, we shall work with only the hinge loss. Hence, whenever we refer to $L$ risk we basically mean the risk due to the hinge loss. We are now in a position to state our first result regarding pointwise consistency of LLSVMs.
 \begin{theorem}\label{thm:main} 
   Given an $x_0\in \cX$ , if assumptions A1-A4 hold, then there exists an $n_0\in \mathbb{N}$ such that an LLSVM that solves the  problem~\eqref{eqn:llsvm_opt} at $x_0$, agrees with the Bayes classifier at $x_0$, for all $n\geq n_0$, and appropriate $\sigma,\lambda> 0$ that satisfy $n\rightarrow \infty, \lambda,\sigma\rightarrow 0$, such that $\frac{n\lambda^2\sigma^{4d}}{\ln^{1+\theta}n}\rightarrow \infty$ for some $\theta>0$.
\end{theorem}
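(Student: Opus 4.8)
The plan is to obtain pointwise consistency from a \emph{calibration} inequality combined with the excess-risk bound of Theorem~\ref{thm:stochastic}, rather than from any bound on $\|\hatwunderstarreg-\wstarLx0lambdasigma\|$ directly; this is what produces the stated rate. Throughout write $\cR(w)\defeq\frac{\lambda}{2}\|w\|^2+\bbE[L(y\wdotx)K(x,x_0,\sigma)]$ for the population (stochastic) local objective and $\cR^\star\defeq\min_w\cR(w)$. The LLSVM prediction at $x_0$ is $\sgn(\la\hatwunderstarreg,\bar{x}_0\ra)$ and the Bayes rule is $\sgn(f_B(x_0))$, so I may assume without loss of generality that $f_B(x_0)>0$, i.e.\ $\eta(x_0)>\tfrac12$ (the opposite sign is symmetric and the degenerate case $\eta(x_0)=\tfrac12$ is excluded); the goal is then $\la\hatwunderstarreg,\bar{x}_0\ra>0$. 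The whole argument will show: (i) any $w$ with small excess local risk must predict the correct sign at $x_0$, and (ii) Theorem~\ref{thm:stochastic} forces $\hatwunderstarreg$ into that regime once $n$ is large and $\lambda,\sigma$ lie in the stated window.

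Step (i) is the heart of the proof. Use the reference predictor $w'$ with $\la w',\bar{x}\ra\equiv1$. By A1, $\eta$ is continuous, and by A2 the kernel $K(\cdot,x_0,\sigma)$ is supported in $\bbB(x_0,\sigma)$; hence for $\sigma$ small enough $\eta(x)>\tfrac12$ on the whole support, so the pointwise conditional hinge risk $\ell^\star(x)\defeq\min_t\bbE_{y|x}L(yt)=2(1-\eta(x))$ is attained by the constant prediction $+1$. Consequently $\cR^\star\le\cR(w')=\tfrac{\lambda}{2}+\bbE[\ell^\star(x)K]$ (this is where A3, guaranteeing that local linear---here constant---fits are essentially optimal, enters). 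Now suppose $w$ violates the target, $\la w,\bar{x}_0\ra\le0$. Then $H_\sigma\defeq\{x\in\bbB(x_0,\sigma):\la w,\bar{x}\ra\le0\}$ is a halfspace--ball intersection containing $x_0$, so $\Vol(H_\sigma)\ge\tfrac12\Vol(\bbB(x_0,\sigma))$; a one-line hinge computation shows $\ell_w(x)\defeq\bbE_{y|x}L(y\la w,\bar{x}\ra)\ge1$ and hence the conditional excess loss $\ell_w(x)-\ell^\star(x)\ge 2\eta(x)-1$ on $H_\sigma$. Since $\ell_w\ge\ell^\star$ pointwise,
\[
\cR(w)-\cR^\star\;\ge\;\bbE\big[(\ell_w-\ell^\star)K\big]-\tfrac{\lambda}{2}\;\ge\;\bbE\big[(2\eta(x)-1)K\,\bbm1_{H_\sigma}\big]-\tfrac{\lambda}{2}.
\]
By continuity $2\eta(x)-1\ge2\eta(x_0)-1-o(1)$ on $\bbB(x_0,\sigma)$, and A4 gives $\bbE[K\,\bbm1_{H_\sigma}]\ge c'_{x_0}>0$ uniformly over all such half-balls, so $\cR(w)-\cR^\star\ge(2\eta(x_0)-1)c'_{x_0}-o(1)=:\tau-o(1)$ with $\tau>0$ a constant depending only on $x_0$. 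Contrapositively, for $\sigma,\lambda$ small any $w$ with $\cR(w)-\cR^\star<\tau/2$ predicts the Bayes sign at $x_0$.

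Step (ii) closes the loop. By Theorem~\ref{thm:stochastic}, with high probability $\cR(\hatwunderstarreg)-\cR^\star=\tilde{O}\!\big(1/(\sqrt{n}\,\lambda\sigma^{2d})\big)$. The threshold $\tau$ from Step (i) is a fixed positive constant, so it suffices that this bound eventually drop below $\tau/2$; since $\tilde{O}(1/(\sqrt{n}\,\lambda\sigma^{2d}))\to0$ is equivalent to $\sqrt{n}\,\lambda\sigma^{2d}/\mathrm{polylog}(n)\to\infty$, i.e.\ to $n\lambda^2\sigma^{4d}/\ln^{1+\theta}n\to\infty$ (the $\ln^{1+\theta}n$ absorbing the squared logarithmic factors hidden in $\tilde{O}$), the hypothesis of the theorem is exactly the condition under which there is an $n_0$ making the excess risk $<\tau/2$ for all $n\ge n_0$. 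Combining with Step (i), $\la\hatwunderstarreg,\bar{x}_0\ra>0$ agrees with $\sgn(f_B(x_0))$ with high probability, which is the claim.

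I expect the calibration lemma of Step (i) to be the main obstacle, for three reasons: one must verify that the half-ball $H_\sigma$ created by a wrong-sign linear predictor really has relative volume $\ge\tfrac12$ and hence falls under the class over which A4 supplies the constant $c'_{x_0}$; one must argue that the regularizer does not erode the constant gap (handled above since $\tfrac{\lambda}{2}\|w'\|^2\to0$ while $\tfrac{\lambda}{2}\|w\|^2\ge0$); and one must control the $o(1)$ terms uniformly so that $\tau$ stays bounded away from $0$ as $\sigma\to0$, which is precisely the content of A1 (continuity of $\eta$), A3 (near-optimality of local linear fits) and A4 (spread of local mass). A secondary point is that the conclusion is a high-probability statement inherited from Theorem~\ref{thm:stochastic}; the phrasing ``agrees with the Bayes classifier for all $n\ge n_0$'' should be read on the corresponding high-probability event.
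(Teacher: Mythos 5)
Your proposal is correct in its essentials, but it takes a genuinely different route from the paper's. The paper proceeds in two stages: it first proves almost-sure convergence of the \emph{local 0-1 risk} $\R01(\hatwunderstarreg)\rightarrow\R01(f_B)$, which requires the approximation lemma (Lemma~\eqref{lem:continuity}), the hinge-to-0/1 calibration lemma (Lemma~\eqref{lem:L_is_good}), inline stability and concentration lemmas (Lemmas~\eqref{lem:concentration}--\eqref{lem:concentration2}), a Hoeffding step, and Borel--Cantelli; it then runs the same geometric contradiction you use, namely that a wrong sign at $x_0$ creates a disagreement half-ball of relative volume at least $1/2$, so A4 and continuity of $\eta$ force excess 0-1 risk at least $(2\eta(x_0)-1)c'_{x_0}/4$. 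You instead place the calibration at the \emph{hinge} level: once $\sigma$ is small enough that $\eta>1/2$ on the kernel support, the constant predictor $w'$ (which is linear in the augmented $(x,1)$ representation) is \emph{exactly} the pointwise-optimal hinge predictor, so any wrong-sign $w$ has regularized local hinge excess risk bounded below by a constant $\tau>0$; combining with Theorem~\eqref{thm:stochastic} closes the argument. This bypasses the 0-1 risk machinery, Lemma~\eqref{lem:continuity}, and Lemma~\eqref{lem:L_is_good} entirely, and it reuses Theorem~\eqref{thm:stochastic} (proved independently by stability, so there is no circularity) rather than redoing concentration inline. Note that A3 is not actually needed in your argument --- your parenthetical credits it, but your constant-fit comparison is exact, with no Taylor remainder to control --- whereas the paper needs A3 because it compares against $\inf_{f\in C^1}\Rx0(f)$. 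What the paper's longer route buys is the stronger intermediate result (a.s. convergence of the local 0-1 risk to the Bayes value); what yours buys is brevity and a cleaner isolation of where A1 and A4 do the work.

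One caveat deserves emphasis: as you yourself flag, your Step (ii) yields a high-probability statement for each fixed large $n$, while the paper's conclusion (``agrees with the Bayes classifier for all $n\geq n_0$'') is an almost-sure statement obtained via Borel--Cantelli. To match it, apply Theorem~\eqref{thm:stochastic} in its explicit form with a summable sequence $\delta_n$ (e.g.\ $\delta_n=n^{-2}$), note that the bound then inflates only by a $\sqrt{\ln n}$ factor, and observe that the hypothesis $n\lambda^2\sigma^{4d}/\ln^{1+\theta}n\rightarrow\infty$ is precisely what keeps this inflated bound below $\tau/2$ eventually while making the failure probabilities summable. This is a routine upgrade, and it is in fact the role that the $\ln^{1+\theta}n$ condition plays in the paper's own proof; without it your argument would only need $n\lambda^2\sigma^{4d}\rightarrow\infty$, which would prove a weaker (in-probability) version of the theorem.
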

\subsection{Discussion of theorem 1.} Theorem~\eqref{thm:main} provides us with conditions on $n,\lambda,\sigma$ to guarantee that the learnt LLSVM makes a Bayes consistent decision at an arbitrary test point $x_0$. Like KBR, SVMs, and LPR, our results require $n$ to grow and $\lambda,\sigma$ to decay at certain rates that are precisely captured by theorem~\eqref{thm:main}. In classification, global consistency results~\cite{devroye1996ptp,steinwart2005consistency} are proved which demonstrate that that the 0-1 risk of the classifier converges to that of a Bayes classifier asymptotically. Such global consistency results are asymptotic in nature. In comparison we prove that, at an arbitrary $x_0$, we can choose sufficiently large amount of data, and appropriate parameter settings $\lambda,\sigma$ (depending on $n$) such that the LLSVMs decision matches that of the Bayes classifier at $x_0$. For these reasons it seems inappropriate to compare consistency results of SVMs with those for pointwise consistency of LLSVMs. Proving a global consistency result for LLSVMs remains an open problem, that we intend to tackle in the future. In the case of LPR, however, pointwise properties has been investigated ~\cite{tsybakov2009introduction}, such as how quickly  the squared loss of an LPR estimator at point $x_0$ converges to the squared loss of the true function. Here we are guaranteed that as $n\rightarrow \infty$, and with appropriate $\sigma$, and with any degree of the polynomial, the excess error at $x_0$ converges to 0.  However, as stated above, we can prove that we can predict the label of $x_0$ correctly with a finite amount of data. It is inappropriate to compare the results of LPR and LLSVMs, since LPR requires prediction of a real valued quantity, whereas LLSVMs are concerned with prediction of a binary label.  
As we mentioned in the related work section, the proof strategy that was used to prove the consistency of kernel based rules does not work for our case. Techniques from the literature for LPR~\cite{tsybakov2009introduction} cannot be used for proving the pointwise consistency result for LLSVMs. This is because, in LPR we are interested in the squared loss of the estimator. Squared loss allows a bias-variance decomposition, and the analysis requires the analysis of this decomposition. However, in classification we are concerned with 0-1 loss, which does not allow such a decomposition.  
%%%%%%%%%%%%%%%%%%%%%%%%%%%%% NOTATION BEGINS HERE%%%%%%
\begin{comment}
\subsection{Notation}
\begin{compactitemize}
{\raggedright
\item Local $L$ risk of $w$: $\Rx0(w)\defeq\bbE_{x,y \sim \cD} \left[L\left(y\wdotx\right)K(x,x_0,\sigma)\right]$.
\item Regularized local $L$ risk of $w$:$~\Rreg(w)\defeq\frac{\lambda}{2}||w||^2+\bbE_{x,y\sim \cD}\left[L\left(y\wdotx\right)K(x,x_0,\sigma)\right]$.
\item Regularized local empirical $L$ risk of $w$: $\hatRreg(w)=\frac{\lambda}{2}||w||^2+\frac{1}{n}\sum_{i=1}^n L\left(y_i\wdotxi\right)K(x_i,x_0,\sigma)$.
\item Smallest possible local $L$ risk:
\begin{equation}
  \Rx0\defeq\underset{\substack{f}}{\inf}~ \Rx0(f)\defeq \underset{\substack{f}}{\inf}~ \bbE_{x,y \sim \cD} \left[L\left(yf(x)\right)K(x,x_0,\sigma)\right].
\end{equation}
\item Local 0-1 risk of $f$ at $x_0$: $\R01(f)\defeq\bbE_{x,y\sim \cD}\left[1(yf(x)\leq 0) K(x,x_0,\sigma)\right]$.
\item Weight vector  attaining the smallest possible regularized $L$ risk at $x_0$: $\wunderstarreg\defeq\arg \underset{w}{\min}~\Rreg(w)$.
\item Weight vector attaining the smallest possible regularized empirical $L$ risk:~$\hatwunderstarreg=\arg \underset{w}{\min}~\hatRreg(w)$. 
\item Weight vector  attaining the smallest possible local $L$ risk: $w_{L,x_0,\sigma}=\arg \underset{w}{\min} \Rx0(w)$.
}
\end{compactitemize}
\end{comment}
\vspace{-10pt}
\subsection{Overview of Proof of Theorem~\eqref{thm:main}}\label{sec:overview} As the proof of theorem~\eqref{thm:main} is quite involved we shall first present an overview of our proof. Since the statement of theorem~\eqref{thm:main} is for each point $x_0$, we will define certain local quantities, and use them throught our proof.  Our proof has three main steps. We first establish the approximation properties of our function class. We then make a connection between 0-1 risk and the $L$ risk, since the LLSVM problem works with the $L$ risk. Finally, we need a bound on the estimation error of LLSVM , which roughly says, how good is the LLSVM objective as a proxy to the expected local $L$ risk. We shall explain these three main steps in greater detail now. We borrow some of the ideas from the proof of consistency of SVMs by Steinwart~\cite{steinwart2005consistency}, and shall make appropriate comparisons whenever required. 

The first step  (Lemma~\eqref{lem:continuity}) is to establish the local approximation properties of linear functions. In order to do so we define the \textit{regularized local} $L$ \textit{risk}, $\Rreg(w)\defeq \frac{\lambda}{2}||w||^2+\bbE[L(y\la w,x\ra)K(x,x_0,\sigma)]$, and its corresponding unregurlarized version, called \textit{local} $L$ \textit{risk}, $\Rx0(f)=\bbE[L(yf(x))K(x,x_0,\sigma)]$. The minimizer of \textit{regularized local} $L$ \textit{risk} among linear functions is denoted as $\wunderstarreg=\arg\min_{w} \Rreg(w)$.  In Lemma~\eqref{lem:continuity} we prove that for small enough $\lambda,\sigma$, the minimum of local $L$ risk among $C^1$ functions, i.e. $\inf_{f\in C^1}\Rx0(f)$ , can be well approximated by $\Rreg(\wunderstarreg)$. A similar type of result, although with global quantities, was proved by Steinwart for SVMs. However, there are two main differences. Firstly Steinwart's proof exploited the universal properties of RKHS spaces. Since we work with linear kernels which are not universal, Steinwart's arguments do not apply here. We instead use local approximation of $C^1$ functions by linear functions, which is made possible by a simple use of Taylor's expansion. Secondly while we work with $C^1$ functions, Steinwart's proof works with the space of all measurable functions. This is because their proof does not make any assumptions on the smoothness of $\eta(\cdot)$. However, our assumption A1 guarantees that it is enough to work with just $C^1$ functions. 

The second step (Lemma~\eqref{lem:L_is_good}) connects $L$ risk with 0-1 risk. In order to do so we define the local risk of a function $f$, as $\R01(f)=\bbE[\bbm1[yf(x)\leq 0]K(x,x_0,\sigma)]$. The excess local risk of $f$ is simply $\R01(f)-\inf_f\R01(f)$, which we prove in lemma~\eqref{lem:bayes} to be equal to $\R01(f)-\R01(f_B)$. In lemma~\eqref{lem:L_is_good} we prove that, for small enough $\sigma$, the difference between the local $L$ risk of a function, $f$, and a function, in $C^1$, with the smallest local risk, is an upper bound on the excess local 0-1 risk of $f$. This result is nothing but a local version of the result that was first stated in~\cite{zhang2004statistical,bartlett2006convexity}. 

In the third step, via lemmas \eqref{lem:concentration}-\eqref{lem:concentration2} we bound the deviation of the empirical local risk,~$\hatR(w)\defeq \frac{1}{n}\sum_{i=1}^n L(y_i\la w,x_i\ra)K(x_i,x_0,\sigma)$, from the local risk, $\Rx0(w)\defeq \bbE L(y\la w,x\ra)K(x,x_0,\sigma)$, for the solution of problem~\eqref{eqn:llsvm_opt}. This is done via uniform stability arguments~\cite{bousquet2002stability}. A similar result was also used by Steinwart, albeit, for global quantities. 

The fourth and final step puts together all these results to establish conditions for a.s. convergence of the sequence $\R01(\hatwunderstarreg)\defeq \bbE [\bbm1(y\la \hatwunderstarreg,x\ra)K(x,x_0,\sigma)]$ to $\inf_{f\in C^1}\Rx0(f)$. We then use this stochastic convergence along with assumption A4 to establish theorem~\eqref{thm:main}. The proof of this final step exploits the fact that $\eta$ is a continuous function. %On the whole we utilize the smoothness properties of $\eta$ twice. Once, in our first step to establish local apprxomiation properties of linear functions, and again in the last step to guarantee the existence of a small ball around $x_0$ where $\eta$ is bounded away from $1/2$.  
\begin{lemma}\label{lem:bayes}
  Let $f^{*}=\arg \underset{f}{\inf} ~\R01(f)$. Then, $\forall x\in \bbB(x_0,\sigma), f^{*}(x)\geq 0 \Leftrightarrow \eta(x) \geq \frac{1}{2}$. Hence $\R01(f^{*})=R_{x_0}(f_{B})$.
\end{lemma}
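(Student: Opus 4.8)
The plan is to reduce the minimization of the local $0$-$1$ risk to a pointwise (Bayes-optimal) decision problem, exploiting that the smoothing kernel is nonnegative (assumption A2). First I would condition on $x$ and apply the tower property to write
$$\R01(f)=\int_{\cX} K(x,x_0,\sigma)\,\bbE_{y\mid x}\!\left[\bbm1(yf(x)\le 0)\right]\,\mathrm{d}\cD_\cX(x),$$
so that $\R01(f)$ is a nonnegative-weighted integral of the conditional $0$-$1$ risk $c(f(x),x)\defeq\bbE_{y\mid x}[\bbm1(yf(x)\le 0)]$. Evaluating this conditional risk on the three cases for the sign of $f(x)$ gives $c=1-\eta(x)$ when $f(x)>0$, $c=\eta(x)$ when $f(x)<0$, and $c=1$ when $f(x)=0$, since $y\cdot 0=0\le 0$ always counts as an error.

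Because the weight $K(x,x_0,\sigma)\ge 0$ and the infimum is taken over \emph{all} measurable functions, the objective decouples across distinct $x$, so I would minimize the integrand pointwise. At each $x$ the minimum of $c(\cdot,x)$ equals $\min\{\eta(x),1-\eta(x)\}$ and is attained by choosing $f(x)>0$ whenever $\eta(x)>\tfrac12$ and $f(x)<0$ whenever $\eta(x)<\tfrac12$; on the boundary set $\{\eta(x)=\tfrac12\}$ both nonzero signs attain the common value $\tfrac12$, so I select $f^*(x)\ge 0$ there. This prescription simultaneously exhibits a measurable minimizer (a suitable sign of $\eta-\tfrac12$, measurable since $\eta\in C^1$) and yields exactly the claimed equivalence $f^*(x)\ge 0\Leftrightarrow\eta(x)\ge\tfrac12$ for every $x\in\bbB(x_0,\sigma)$.

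For the final equality, note that $f_B(x)=2\eta(x)-1$ obeys the same sign pattern, $f_B(x)\ge 0\Leftrightarrow\eta(x)\ge\tfrac12$, so $f^*$ and $f_B$ agree in sign and hence incur identical conditional risk at every $x$ with $\eta(x)\ne\tfrac12$, giving $\R01(f^*)=\R01(f_B)$. The one place that needs care, and the only real obstacle, is the decision boundary $\{\eta(x)=\tfrac12\}$: there $f_B(x)=0$, for which the convention $yf_B(x)\le 0$ forces the indicator to $1$ rather than the optimal value $\tfrac12$. I would therefore argue that this set contributes nothing to either risk, either because it carries no mass under $K(\cdot,x_0,\sigma)\,\mathrm{d}\cD_\cX$ (using A0 and the continuity of $\eta$), or equivalently because the Bayes tie is resolved to a nonzero sign. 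This is the step where the measure-theoretic bookkeeping, rather than any deep idea, must be handled carefully.
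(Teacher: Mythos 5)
Your proof is correct and follows essentially the same route as the paper's: condition on $x$, decompose the local $0$-$1$ risk through $\eta$, and use the nonnegativity of the smoothing kernel to reduce the minimization over measurable $f$ to a pointwise Bayes decision --- indeed your version is more careful than the paper's one-line argument, since you treat the $f(x)=0$ convention and the tie set $\{\eta=\tfrac12\}$ explicitly, both of which the paper's displayed decomposition silently mishandles. One caveat: the first of your two proposed fixes for the tie set does not work, because A0 and continuity of $\eta$ do not force $\{x:\eta(x)=\tfrac12\}$ to be null (e.g.\ $\eta\equiv\tfrac12$ on an open set), but your second fix (resolving the Bayes tie to a nonzero sign) does, and wherever the paper subsequently invokes this lemma it assumes $\eta(x)\neq\tfrac12$ throughout $\bbB(x_0,\sigma)$, so the issue is moot there.
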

\vspace{-15pt}
\begin{proof}
  We have $\R01(f)=\bbE_{x} (\eta(x) 1(f(x)< 0)+(1-\eta(x))1(f(x)>0)) K(x,x_0,\sigma)).$
  Hence,
    $\R01(f)-\R01(f^{*})=
    \bbE_{x}(2\eta(x)-1)(1(f(x)\geq 0)-1(f^{*}(x) <0))K(x,x_0,\sigma).$
  Now by definition the above term is non-negative for all measurable functions $f$. Hence in $\bbB(x_0,\sigma)$ the behavior of $f^{*}$ is exactly the same as that of Bayes classifier. 
\end{proof}
\vspace{-10pt}
The above lemma tells us that even though the local risk uses a kernel function to weight the loss function, the minimizer of the local 0 - 1 risk, in a $\sigma$ neighborhood of $x_0$, behaves like the Bayes optimal classifier. This simple yet crucial result, would not be valid if one used a kernel that could take negative values (negative kernels). i.e. with a negative kernel it is not possible to  guarantee that $f^{*}_{x_0,\sigma}>0\Leftrightarrow \eta(x)\bbm1[x\in \bbB(x_0,\sigma)]\geq 1/2$. 
\begin{lemma} Under assumptions A1-A3, at any point $x_0\in \bbR^d$, $\wunderstarreg$ satisfies the property
\begin{equation}\lim_{\sigma\rightarrow 0}[\lim_{\lambda\rightarrow 0}\Rreg(\wunderstarreg)-\inf_{f\in C^1}\Rx0(f)]=0.
\end{equation}
\label{lem:continuity}
\end{lemma}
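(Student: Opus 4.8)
The plan is to separate the bracketed quantity into its two nested limits and dispatch them in turn, after recording the trivial direction. Since every affine map $x\mapsto\langle w,x\rangle$ (in the appended-coordinate convention) belongs to $C^1$, we have $\Rreg(\wunderstarreg)=\frac{\lambda}{2}\|\wunderstarreg\|^2+\Rx0(\wunderstarreg)\geq\Rx0(\wunderstarreg)\geq\inf_w\Rx0(w)\geq\inf_{f\in C^1}\Rx0(f)$, so the bracket is nonnegative for every $\lambda,\sigma$, and indeed stays nonnegative after the inner limit. It therefore suffices to produce a matching upper bound that vanishes in the iterated limit.

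For the inner limit $\lambda\to0$ with $\sigma$ fixed, I would study the value function $V_\sigma(\lambda)\defeq\Rreg(\wunderstarreg)=\min_w\big[\tfrac{\lambda}{2}\|w\|^2+\Rx0(w)\big]$. As a pointwise minimum of the maps $\lambda\mapsto\tfrac{\lambda}{2}\|w\|^2+\Rx0(w)$, each affine in $\lambda$ with nonnegative slope $\tfrac12\|w\|^2$, the function $V_\sigma$ is concave and nondecreasing in $\lambda$ and bounded below by $\inf_w\Rx0(w)\geq0$; hence $\lim_{\lambda\to0^+}V_\sigma(\lambda)$ exists. A two-sided squeeze then pins down its value: the lower bound $V_\sigma(\lambda)\geq\inf_w\Rx0(w)$ holds for all $\lambda$, while for a fixed near-optimal affine witness $w_\eps$ with $\Rx0(w_\eps)\leq\inf_w\Rx0(w)+\eps$ we get $V_\sigma(\lambda)\leq\tfrac{\lambda}{2}\|w_\eps\|^2+\Rx0(w_\eps)$, so $\limsup_{\lambda\to0}V_\sigma(\lambda)\leq\inf_w\Rx0(w)+\eps$. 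Letting $\eps\to0$ yields $\lim_{\lambda\to0}\Rreg(\wunderstarreg)=\inf_w\Rx0(w)$, the smallest local $L$ risk over affine functions.

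It then remains to show $\inf_w\Rx0(w)-\inf_{f\in C^1}\Rx0(f)\to0$ as $\sigma\to0$. Here I would use the Taylor-linearization mechanism promised in the overview: for a $C^1$ function $g$, let $\ell_g(x)\defeq g(x_0)+\langle\nabla g(x_0),x-x_0\rangle$ be its first-order expansion at $x_0$, an affine function. Because the hinge loss is $1$-Lipschitz and $|y|=1$, and because $K\geq0$ vanishes outside $\bbB(x_0,\sigma)$ by A2, we obtain $|\Rx0(\ell_g)-\Rx0(g)|\leq\bbE[|g(x)-\ell_g(x)|K(x,x_0,\sigma)]\leq\bbE[o(\|x-x_0\|)K(x,x_0,\sigma)]$, where $g-\ell_g=o(\|x-x_0\|)$ by differentiability. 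Assumption A3 then drives the right-hand side to $0$, giving $\inf_w\Rx0(w)\leq\Rx0(\ell_g)\leq\Rx0(g)+o(1)$ for any fixed $g$.

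The main obstacle is upgrading this fixed-witness estimate to a bound between the two infima, since a near-minimizer of $\inf_{f\in C^1}\Rx0$ generically depends on $\sigma$, whereas A3 controls the remainder only for a $g$ held fixed as $\sigma\to0$. I would overcome this by exhibiting explicit $\sigma$-uniform affine witnesses, using A1 ($f_B=2\eta-1\in C^1$, with $\nabla f_B$ continuous hence bounded on the compact domain $\cX$). A pointwise minimization shows $\inf_{\text{measurable}}\Rx0=\bbE[2\min(\eta,1-\eta)K]$, attained by $\sgn(f_B)$. When $\eta(x_0)\neq\tfrac12$, continuity keeps $\sgn(f_B)$ constant on $\bbB(x_0,\sigma)$ for small $\sigma$, so the constant $\pm1$ (an affine function) already attains $\inf_{\text{measurable}}\Rx0$ and the gap is exactly $0$. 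When $\eta(x_0)=\tfrac12$, the affine witness $f\equiv0$ gives $\Rx0(0)-\inf_{\text{measurable}}\Rx0=\bbE[|f_B(x)|K]\leq(\sup\|\nabla f_B\|)\,\bbE[\|x-x_0\|K]\leq(\sup\|\nabla f_B\|)\,\sigma\,\bbE[K]\to0$, the last step using $\|x-x_0\|\leq\sigma$ on the support of $K$ together with the approximate-identity property that $\bbE[K]$ stays bounded as $\sigma\to0$. Since $\inf_{\text{measurable}}\Rx0\leq\inf_{f\in C^1}\Rx0\leq\inf_w\Rx0(w)$ is sandwiched against these witnesses, the difference $\inf_w\Rx0(w)-\inf_{f\in C^1}\Rx0(f)$ is squeezed to $0$, which combined with the inner limit and the nonnegativity completes the lemma.
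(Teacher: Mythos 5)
Your proposal is correct and, where it matters, takes a genuinely different route from the paper. The inner limit ($\lambda\to 0$ at fixed $\sigma$) is handled identically in both arguments: compare $\Rreg(\wunderstarreg)$ against a fixed near-optimal affine witness and let the regularization term die; your concavity-of-the-value-function remark is extra but harmless. The real divergence is in the outer limit. The paper takes $g_\sigma$, ``the'' minimizer of $\Rx0$ over $C^1$ (whose existence it does not justify, and which in general fails to exist, since mollified sign functions drive the $C^1$-infimum down to the measurable infimum without attaining it), Taylor-expands $g_\sigma$ at $x_0$, and bounds $\inf_w \Rx0(w)-\inf_{f\in C^1}\Rx0(f)$ by $\bbE[o(||x-x_0||)K(x,x_0,\sigma)]$, which it then sends to zero by the kernel assumption (cited as A2, though A3 is evidently meant). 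This is precisely the obstacle you anticipated: $g_\sigma$ varies with $\sigma$, so its Taylor remainder is a $\sigma$-dependent family, and A3 --- which controls $\bbE[rK]$ for a fixed $r=o(||x-x_0||)$ --- cannot legitimately be applied to it (concretely, $r_\sigma(x)=||x-x_0||^2/\sigma^2$ is $o(||x-x_0||)$ for each fixed $\sigma$, yet $\bbE[r_\sigma K]$ need not vanish). Your case-split witnesses --- the constant $\pm 1$ when $\eta(x_0)\neq 1/2$, the zero function when $\eta(x_0)=1/2$ --- are $\sigma$-independent, so no uniformity issue arises, and you in fact prove the stronger statement with the measurable infimum in place of $\inf_{f\in C^1}$. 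Your route thus both repairs a genuine defect in the paper's own proof and yields a sharper conclusion.

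One caveat on your side: the last step of the $\eta(x_0)=1/2$ case needs $\sigma\,\bbE[K(x,x_0,\sigma)]\to 0$, and boundedness of $\bbE[K]$ does not follow from A0--A3 as literally stated (absolute continuity of the marginal still permits density singularities at $x_0$). It is, however, exactly the quantity the paper itself computes when ``verifying'' A3 in its uniform/Epanechnikov example --- there the paper plugs $|x-x_0|$, an $O(||x-x_0||)$ rather than $o(||x-x_0||)$ term, into A3 --- so your reading matches the paper's de facto use of the assumption, and your application of it (to the fixed function $||x-x_0||$) is sound where the paper's (to a $\sigma$-dependent family) is not. To be airtight, record $\lim_{\sigma\to 0}\bbE[\,||x-x_0||\,K(x,x_0,\sigma)]=0$ as the form of A3 you invoke, and note that the bound $|f_B(x)|\leq \sup_{\cX}||\nabla f_B||\,||x-x_0||$ requires the segment from $x_0$ to $x$ to lie in $\cX$, which is immediate for interior $x_0$ and small $\sigma$.
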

\vspace{-23pt}
\begin{proof}
  Step 1. We shall begin by proving the following statement.
  \begin{equation}
    \label{eqn:step1}
    \forall \sigma > 0: \lim_{\lambda\rightarrow 0} \Rreg(\wunderstarreg)=\inf_{w} \Rx0(w).
\end{equation} 
Fix a $\sigma>0$, and let $\epsilon > 0$ be given. Since $\Rx0(\cdot)$ is a continuous convex function, hence it is possible to find atleast one $w_{\epsilon,\sigma}$ with $||w_{\epsilon,\sigma}||<\infty$, such that $\Rx0(w_{\epsilon,\sigma})\leq \inf_{w} \Rx0(w)+\epsilon$

%%%%%% Longer explanation
 %Due to the continuity of $\R(w)$ firstly it is possible to choose a $w_{\epsilon,\sigma}$ with $||w_{\epsilon,\sigma}||<\infty$, such that $\R(w_{\epsilon,\sigma})\leq \inf_{w} \R(w)+\epsilon$. To see why $||w_{\epsilon,\sigma}||<\infty$, notice that firstly it should be possible to choose $w^{*}\in \arg\min_w{\R(w)}$ such that $||w^{*}||<\infty$. This is because if $\inf_w{\R(w)}=0$, then the data is linearly separabale in $\bbB(x_0,\sigma)$, and since we are using hinge loss there will be a sufficiently small $w^{*}$ that linearly separates the data around $x_0$, and hence attains 0 risk. If on the other hand $\inf_w{\R(w)}\neq 0$, then an extremely large $w$ is suboptimal since we know $\R(w^{*})\leq \R(0)=\bbE K(x,x_0,\sigma)< \infty$. With this argument we confirm that there exists a $||w^{*}||<\infty$. Now appealing to continuity of $\R(\cdot)$ we are guaranteed that there exists a $\delta>0$ such that $\R(w)\leq \inf \R+\epsilon$ for all $||w-w^*||\leq \delta$.  If $\inf_{w} \R(w)\neq 0$, then a solution with infinite norm is guaranteed to be suboptimal since $\R(0)=\bbE[K(x,x_0,\sigma)]$

Since $\frac{\lambda}{2}||w||^2$ is continuous in $\lambda$, there exists a $\lambda(\epsilon,\sigma)$ such that for all $\lambda \leq \lambda(\epsilon,\sigma): \frac{\lambda}{2}||w_{\epsilon,\sigma}||^2\leq \epsilon$.
 Now for any $\lambda\leq \lambda_0$, we get 
\begin{multline}
  \Rreg(\wunderstarreg)\leq \Rreg(w_{\epsilon,\sigma})=\frac{\lambda}{2}||w_{\epsilon,\sigma}||^2+\bbE L(y\la w_{\epsilon,\sigma},x\ra) K(x,x_0,\sigma)\leq 2\epsilon+\inf_{w} \Rx0(w).
\end{multline}
Since $\epsilon$ was arbitrary equation~\eqref{eqn:step1} follows.\newline
Step 2. In the second step we prove that 
 $\lim_{\sigma\rightarrow 0} [\inf_{w} \Rx0(w)-\inf_{f\in C^1} \Rx0(f)]=0.$
Suppose the real valued function $g_{\sigma}$ is the minimizer of $\Rx0(f)$ for $f\in C^1$. By Taylor expansion we have $g_{\sigma}(x)=g_{\sigma}(x_0)+Dg_{\sigma}(x_0)(x-x_0)+o(||x-x_0||)$. Hence,
\begin{multline}
  \inf_w \Rx0(w)-\inf_{f\in C^1}\Rx0(f)=\inf_w \Rx0(w)-\inf_{f\in C^1}\Rx0(f)
  \leq \Rx0(w_{*})-\Rx0(g_{\sigma})\\
  =\bbE~[(L(y\la w_{*},x\ra)-L(yg_{\sigma}(x)))K(x,x_0,\sigma)]\\
  \leq \bbE~[o(||x-x_0||) K(x,x_0,\sigma)]\rightarrow 0,\nonumber
\end{multline} 
where the last step is due to A2. This completes the proof of our second part.
\end{proof}\vspace{-6pt}
\begin{lemma}\label{lem:L_is_good}
 \textbf Suppose $\eta(x_0)\neq 1/2$. Then for a sufficiently small $\sigma$, such that $\eta(x)\neq 1/2$ for any $x\in \bbB(x_0,\sigma)$, we get $\R01(f)-\inf_f\R01(f)\leq \Rx0(f)-\inf_{f}\Rx0(f)$.
\end{lemma}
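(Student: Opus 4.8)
The plan is to reduce the statement to a \emph{pointwise} (in $x$) comparison of conditional risks and then integrate against the nonnegative kernel weight. Introduce the conditional risks $C_0(\eta,\alpha)\defeq \eta\,\bbm1[\alpha\le 0]+(1-\eta)\,\bbm1[\alpha\ge 0]$ and $C_L(\eta,\alpha)\defeq \eta L(\alpha)+(1-\eta)L(-\alpha)$ (with ties at $\alpha=0$ broken to a fixed label, as in the classifier convention of Lemma~\eqref{lem:bayes}). Since $K\ge0$ and $K$ vanishes outside $\bbB(x_0,\sigma)$ by A2, we may write $\R01(f)=\bbE_x[C_0(\eta(x),f(x))\,K(x,x_0,\sigma)]$ and $\Rx0(f)=\bbE_x[C_L(\eta(x),f(x))\,K(x,x_0,\sigma)]$, where both expectations effectively range over $x\in\bbB(x_0,\sigma)$ only.

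First I would locate the pointwise optima. A short hinge-loss computation gives $\inf_\alpha C_L(\eta,\alpha)=2\min(\eta,1-\eta)$, attained at $\alpha=\sgn(2\eta-1)\in\{-1,+1\}$, while $\inf_\alpha C_0(\eta,\alpha)=\min(\eta,1-\eta)$, attained at the same sign. Here I use continuity of $\eta$ (A1) together with $\eta(x_0)\neq 1/2$: for $\sigma$ small enough, $2\eta(x)-1$ keeps a constant sign on $\bbB(x_0,\sigma)$, so the single constant function $f_0\equiv\sgn(2\eta(x_0)-1)$ simultaneously attains $\inf_\alpha C_0$ and $\inf_\alpha C_L$ at \emph{every} $x$ in the ball. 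Because $K\ge0$, pointwise minimization is valid, so this identifies $\inf_f\R01(f)=\bbE_x[\min(\eta,1-\eta)K]$ (consistent with Lemma~\eqref{lem:bayes}) and $\inf_f\Rx0(f)=\bbE_x[2\min(\eta,1-\eta)K]$, with both infima attained by $f_0\in C^1$. This is precisely where the hypotheses on $\sigma$ and $\eta(x_0)$ enter, and it makes the infima coincide with the $C^1$ infima used in the surrounding lemmas.

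The heart of the argument is the pointwise calibration inequality: for every $\eta$ and every $\alpha$,
\[
 C_0(\eta,\alpha)-\inf_{\beta}C_0(\eta,\beta)\;\le\; C_L(\eta,\alpha)-\inf_{\beta}C_L(\eta,\beta).
\]
This is exactly the classification calibration of the hinge loss with the linear $\psi$-transform $\psi(\theta)=|\theta|$ from Zhang and Bartlett--Jordan--McAuliffe. I would verify it by cases. If $\alpha$ carries the correct sign the left side is $0$ and there is nothing to prove. If $\alpha$ carries the wrong sign, say $\eta>1/2$ and $\alpha\le 0$, the left side equals $2\eta-1$, and a direct evaluation of $C_L(\eta,\alpha)$ on $\alpha\in[-1,0]$ (where $C_L=1-\alpha(2\eta-1)\ge1$) and on $\alpha<-1$ (where $C_L=\eta(1-\alpha)>2\eta$) shows $C_L(\eta,\alpha)-2(1-\eta)\ge 2\eta-1$; the case $\eta<1/2$ is symmetric. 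The boundary value $\alpha=0$ is the only delicate point and is settled by the tie-breaking convention inherited from Lemma~\eqref{lem:bayes}, which makes $C_0$ a genuine error probability there.

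The final step is to integrate the pointwise inequality against $K(x,x_0,\sigma)\ge0$ over $\bbB(x_0,\sigma)$ and substitute the optima from the first step:
\[
 \R01(f)-\inf_f\R01(f)=\bbE_x\big[(C_0-\textstyle\inf_\beta C_0)\,K\big]\le \bbE_x\big[(C_L-\textstyle\inf_\beta C_L)\,K\big]=\Rx0(f)-\inf_f\Rx0(f),
\]
which is the claim. I expect the main obstacle to be organizing the conditional-risk computations cleanly enough that the constant-sign argument on the ball and the $\psi(\theta)=|\theta|$ comparison dovetail, and handling the $\alpha=0$ boundary convention; everything after the pointwise inequality is bookkeeping under the nonnegativity of $K$.
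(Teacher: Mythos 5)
Your proposal is correct and follows essentially the same route as the paper's proof: both are the localized Zhang/Bartlett--Jordan--McAuliffe calibration argument, reducing the claim to a pointwise comparison of the conditional $0$-$1$ and hinge risks and then integrating against the nonnegative kernel weight $K(x,x_0,\sigma)$, with the constant sign of $2\eta-1$ on $\bbB(x_0,\sigma)$ ensuring that $\sgn(2\eta(\cdot)-1)$ attains both infima. If anything, your write-up is more explicit than the paper's, which organizes the same pointwise bound via the disagreement set $\Delta=\{x : f(x)f_B(x)<0\}$ and a comparison with the zero function (a special case of your case analysis), and which passes silently over the $f(x)=0$ boundary convention that you flag and resolve by tie-breaking.
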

\begin{proof}
  Define $\Delta=\{x|f(x)f_B(x) < 0\}, f^*_L(x)=\sgn(2\eta(x)-1)$. 
  \begin{align}
    \R01(f)-\R01& = \bbE [|2\eta(x)-1| K(x,x_0,\sigma)\bbm1_{\Delta}]\nonumber\\
      &\leqa
      \bbE[(1- \eta(x)L(f^{*}_L)-(1-\eta(x))L(f^{*}_L))K(x,x_0,\sigma)\bbm1_{\Delta}]\nonumber\\
      &\leqb \Rx0(f)-\Rx0(f^*_L))\nonumber
  \end{align}
  In step (a) we used the fact that for the hinge loss 
  $|2\eta(x)-1|\leq 1-(\eta(x) L(f_L^{*})+(1-\eta(x)) L(f_L^{*}))$, and in step (b) we used the fact that on the event $\Delta$, it is better to predict using the 0 function rather than predicting with $f$.
  \end{proof}
\vspace{-5pt}
We now need the notion of uniform stability to establish the concentration result, which were outlined in the proof overview. Roughly uniform stability~\cite{bousquet2002stability} bounds the difference in loss of a learning algorithm, at any arbitrary point, due to removal of any one point from the training dataset.
\begin{comment}
  \begin{definition}\cite{bousquet2002stability}\label{def:uniformstability}
    An algorithm $A$ has uniform stability $\beta$ with respect to the loss function $L$, if for all S, and for all $i\in [n]:||L(A_{S},\cdot)-L(A_{S^{-i}},\cdot)||_{\infty}\leq \beta$,
%    \begin{equation*}
%      \forall S~\forall i\in\{1\ldots n\}:~ 
%    \end{equation*}
    where $S^{-i}$ is the data set $S$ without the training example $(x_i,y_i)$ and $L(A_{S},z)$ is the loss at point $z$ of the learning algorithm $A$ that uses  data set $S$.
  \end{definition}
\end{comment} 
  \begin{lemma}\label{lem:concentration} 
    LLSVMs obtained by solving the optimization problem  ~\eqref{eqn:llsvm_opt} at any point $x_0$ has uniform stability of $O\left(\frac{2M^2}{n\lambda\sigma^{2d}}\right)$ w.r.t. the loss function $L(y\langle\hatwunderstarreg,x\rangle)K(x,x_0,\sigma)$.
  \end{lemma}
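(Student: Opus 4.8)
The plan is to establish uniform stability by bounding how much the learned LLSVM weight vector, and hence the weighted loss at any test point, can change when a single training example is removed. Write $\hatRreg$ for the regularized empirical local objective in \eqref{eqn:llsvm_opt}, let $\hatwunderstarreg$ be its minimizer on the full sample $S$, and let $\hatwunderstarminusireg$ be the minimizer of the analogous objective $\hatRreg^{-i}$ computed on $S^{-i}$ (the sample with $(x_i,y_i)$ deleted). The key structural fact I would exploit is that the regularizer $\frac{\lambda}{2}\|w\|^2$ makes the objective $\lambda$-strongly convex, while the loss term, being a sum of hinge losses reweighted by the bounded smoothing kernel $K(\cdot,x_0,\sigma)\le K_m=\Theta(\sigma^{-d})$ over points of norm at most $M$, is Lipschitz in $w$ with a controlled constant. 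This is exactly the setting of the Bousquet--Elisseeff stability analysis for regularized ERM, so the main work is to track the two $\sigma$-dependent factors (the kernel bound and the feature norm) through their argument.

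First I would record the Lipschitz constant of the per-example loss in $w$. For a single term $L(y\langle w,x\rangle)K(x,x_0,\sigma)$ with $L$ the hinge loss (which is $1$-Lipschitz) and $\|\bar x\|\le \sqrt{M^2+1}=O(M)$, the function is Lipschitz in $w$ with constant at most $C\,M\,K_m = O\!\left(\frac{M}{\sigma^{d}}\right)$, where $C$ absorbs constants. Next I would invoke the standard strong-convexity argument: because $\hatwunderstarreg$ and $\hatwunderstarminusireg$ minimize objectives that differ only in the single term indexed by $i$ (scaled by $1/n$), combining the first-order optimality inequalities at the two minimizers with $\lambda$-strong convexity yields a bound of the form $\|\hatwunderstarreg-\hatwunderstarminusireg\|\le \frac{2}{\lambda n}\cdot(\text{Lipschitz constant}) = O\!\left(\frac{M}{\lambda n\,\sigma^{d}}\right)$. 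This is the heart of the estimate and is where the factor $\frac{1}{\lambda n}$ enters.

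Having controlled the weight perturbation, I would translate it back into a perturbation of the loss, since uniform stability is defined with respect to the loss function $L(y\langle w,x\rangle)K(x,x_0,\sigma)$ rather than $w$ itself. Evaluating at an arbitrary point $(x,y)$ and again using that this loss is Lipschitz in $w$ with the same constant $O(M/\sigma^{d})$, I get
\begin{equation}
\bigl|L(y\langle \hatwunderstarreg,x\rangle)K(x,x_0,\sigma)-L(y\langle \hatwunderstarminusireg,x\rangle)K(x,x_0,\sigma)\bigr|
\;\le\; O\!\left(\frac{M}{\sigma^{d}}\right)\,\|\hatwunderstarreg-\hatwunderstarminusireg\|
\;=\; O\!\left(\frac{M^2}{n\lambda\,\sigma^{2d}}\right),
\end{equation}
which holds uniformly over the test point and over the index $i$, and so is precisely the claimed uniform stability $\beta=O\!\left(\frac{2M^2}{n\lambda\sigma^{2d}}\right)$. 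The two factors of $\sigma^{-d}$ appear multiplicatively, one from the Lipschitz bound used to pass from objectives to $\|w\|$ and one from the Lipschitz bound used to pass from $\|w\|$ back to the loss, which accounts for the $\sigma^{-2d}$ in the exponent.

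The step I expect to be the main obstacle is making the strong-convexity/optimality argument rigorous with the \emph{weighted} hinge loss rather than an abstract Lipschitz loss: one must be careful that the kernel weights $K(x_i,x_0,\sigma)$ multiply the subgradients of the hinge loss, so the effective Lipschitz constant really is governed by $K_m$, and that the bound $K_m=\Theta(\sigma^{-d})$ from A2 is used at the right place. A secondary subtlety is that the hinge loss is only subdifferentiable (not differentiable) at the margin, so I would phrase the optimality conditions via subgradients and the strong-convexity inequality $\hatRreg(\hatwunderstarminusireg)-\hatRreg(\hatwunderstarreg)\ge \frac{\lambda}{2}\|\hatwunderstarminusireg-\hatwunderstarreg\|^2$ (and its counterpart for $\hatRreg^{-i}$), adding the two and cancelling, which sidesteps differentiability entirely and cleanly delivers the $\frac{2}{\lambda n}$ factor.
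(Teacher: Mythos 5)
Your proposal is correct and follows essentially the same route as the paper: a Lipschitz bound $MK_m\|\hatwunderstarreg-\hatwunderstarminusireg\|$ on the perturbation of the weighted hinge loss, a strong-convexity/optimality argument giving $\|\hatwunderstarreg-\hatwunderstarminusireg\|\leq 2MK_m/(n\lambda)$, and multiplication of the two $K_m=\Theta(\sigma^{-d})$ factors to obtain $O\bigl(\frac{2M^2}{n\lambda\sigma^{2d}}\bigr)$. The only difference is cosmetic: where you add the two subgradient-based strong-convexity inequalities for $\hatRreg$ and $\hatRreg^{-i}$ and cancel (the standard Bousquet--Elisseeff packaging), the paper instead builds an auxiliary quadratic function $N(w)$ minimized at $\hatwunderstarreg$ and evaluates it at $\hatwunderstarminusireg$ --- both implement the same cancellation and handle nondifferentiability via subgradients.
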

\vspace{-10pt}
  \begin{proof}
   % Let $S=\{(x_1,y_1),\ldots, (x_n,y_n)\}$ and $S^{-i}=\{(x_1,y_1),\ldots (x_{i-1},y_{i-1}),\\(x_{i+1},y_{i+1}),\ldots (x_n,y_n)\}$. 
    Let $\hatwunderstarreg$, $\hatwunderstarminusireg$ be the LLSVMs learned at $x_0$ using data sets $S,S^{-i}$ respectively. For any $z=(x,y)\in \cX\times \{-1,+1\}$, we have
    \begin{equation}
      \label{eqn:to_prove_about_loss}
      \bigl(L(y\la \hatwunderstarreg,x\ra)-L(y\la \hatwunderstarminusireg,x\ra)\bigr)K(x,x_0,\sigma)\leq MK_{m} ||\hatwunderstarreg-\hatwunderstarminusireg||.
    \end{equation}
    Hence it is enough to bound $||\hatwunderstarreg-\hatwunderstarminusireg||$. By definition both $\hatwunderstarreg, \hatwunderstarminusireg$ are solutions of their respective convex optimization problem. 
\begin{comment}
    \begin{align}
      \hatwunderstarreg&=\frac{-1}{n\lambda}\sum_{j=1}^n dL(y_j\langle \hatwunderstarreg,x_j\rangle)K(x_j,x_0,\sigma)y_jx_j\label{eqn:diff1} \\
      \hatwunderstarminusireg&=\frac{-1}{n\lambda}\sum_{\substack{j\neq i}}^n dL(y_j\langle \hatwunderstarminusireg,x_j\rangle)K(x_j,x_0,\sigma)y_jx_j.
    \end{align}
\end{comment} 
Let
\begin{multline}\label{eqn:defn_Nw} N(w)\defeq\frac{\lambda}{2}||w-\hatwunderstarminusireg||^2+\frac{1}{n}\bigl\langle \sum_{j=1}^n dL(y_j\langle \hatwunderstarreg,x_j \rangle)K(x_j,x_0,\sigma)y_jx_j-\\\qquad\qquad\sum_{j\neq i}^n dL(y_j\langle \hatwunderstarminusireg,x_j \rangle)K(x_j,x_0,\sigma)y_jx_j,
  w-\hatwunderstarminusireg\bigr\rangle,
\end{multline}
where $dL(.)$ is an element of the subgradient of $L$ at the appropriate arguement. We have   $N(\hatwunderstarminusireg)=0,dN(\hatwunderstarreg)=0$. Hence $\hatwunderstarreg$ is an optimal solution of the minimization problem: $\min_{w} N(w)$, and we have $N(\hatwunderstarreg)\leq N(\hatwunderstarminusireg)\leq 0$. We get
% \begin{multline*}
%  \frac{\lambda}{2}||\hatwunderstarreg-\hatwunderstarminusireg||^2+\frac{1}{n}\bigl\langle \sum_{j=1}^n dL(y_j\langle \hatwunderstarreg,x_j \rangle)\\K(x_j,x_0,\sigma)x_j- \sum_{\substack{j=1\\j\neq i}}^n dL(y_j\langle \hatwunderstarminusireg,x_j \rangle)K(x_j,x_0,\sigma)x_j,\\
 %     \hatwunderstarreg-\hatwunderstarminusireg\bigr\rangle \leq 0.
    %\end{multline*}
%\vspace{-20pt}
%\begin{multline}\label{eqn:some_eqn1}
%  \frac{\lambda}{2}||\hatwunderstarreg-\hatwunderstarminusireg||^2+\frac{1}{n}\langle dL(y_i\hatwunderstarreg,x_i)\\K(x_i,x_0,\sigma)x_i,\hatwunderstarreg-\hatwunderstarminusireg\rangle+\\\frac{1}{n}\sum_{\substack{j=1\\j\neq i}}^n \bigl\langle dL(y_j\langle \hatwunderstarreg,x_j \rangle)K(x_j,x_0,\sigma)x_j- \\\qquad\qquad dL(y_j\langle \hatwunderstarminusireg,x_j \rangle)K(x_j,x_0,\sigma)x_j,\\\hatwunderstarreg-\hatwunderstarminusireg\bigr\rangle\leq 0.
%\end{multline}
%\begin{multline}\label{eqn:some_eqn2}
%  \frac{\lambda}{2}||\hatwunderstarreg-\hatwunderstarminusireg||^2+\\\frac{1}{n}\bigl\langle dL(y_i\hatwunderstarreg,x_i)K(x_i,x_0,\sigma)x_i,\\\hatwunderstarreg-\hatwunderstarminusireg\bigr\rangle \leq 0.
%\end{multline}
\begin{equation*}\label{eqn:some_eqn3}
  \frac{\lambda}{2}||\hatwunderstarreg-\hatwunderstarminusireg||^2\leqa \frac{-1}{n}\langle dL(y_i\langle \hatwunderstarreg,x_i\rangle) K(x_i,x_0,\sigma)y_ix_i,\hatwunderstarreg-\hatwunderstarminusireg\rangle\leq \frac{MK_m}{n}||\hatwunderstarreg-\hatwunderstarminusireg|| .
\end{equation*} 
\begin{equation}
  \label{eqn:bound_on_w}
  ||\hatwunderstarreg-\hatwunderstarminusireg||\leq 2MK_m/n\lambda.
\end{equation}
where the inequality in step (a) uses properties of convex functions.
Using Equations~\eqref{eqn:to_prove_about_loss},~\eqref{eqn:bound_on_w}  we get
$\bigl(L(y\la \hatwunderstarreg,x\ra)-L(y\la \hatwunderstarminusireg,x\ra)\bigr)K(x,x_0,\sigma)\leq  O(\frac{2M^2}{n\lambda\sigma^{2d}})$.
  \end{proof} 
\begin{comment}
  The proof of the next lemma simply follows from the Lipschitz property of hinge loss.
  \begin{lemma}
    \label{lem:L_bound}
    $L(y\langle \hatwunderstarreg,x\rangle)K(x,x_0,\sigma)\leq \frac{1}{\sigma^d}+O\left(\frac{M}{\sigma^d\sqrt{\lambda\sigma^d}}\right)$.
  \end{lemma}
\end{comment}
\begin{comment}
  \begin{proof}
     Due to the Lipschitz continuity of the loss function we have
     \begin{multline}
      L(y\langle\hatwunderstarreg,x\rangle)\leq 1+\abs{\langle \hatwunderstarreg,x\rangle}\leq  1+\frac{1}{n\lambda}||\sum_{i=1}^n dL(y_i\langle\hatwunderstarreg,x_i\rangle)K(x_i,x_0,\sigma)x_i||\\ \leq  1+\frac{K_m||x||^2}{\lambda}\leq 1+\frac{M^2}{\sigma^d\lambda}
    \end{multline}
    Where the second inequality follows from the first  by using Equation~\eqref{eqn:diff1}. The desired result follows by multiplying both sides with $K(x,x_0,\sigma)$ and noting that $K(\cdot,\cdot,\sigma)\leq K_{m}\leq 1/\sigma^d$.	
  \end{proof}
\end{comment}
\begin{lemma}~\cite{bousquet2002stability}
    \label{lem:bousquet_bound}
    Let $A_S$ be the hypothesis learnt by an algorithm $A$ on dataset $S$, such that $0\leq L(A_{S},z)\leq M_1$. Suppose $A$ has uniform stability $\beta$ w.r.t $L(\cdot)$. Then, $\forall n\geq1,\delta\in (0,1)$, we have 
    \begin{equation}
      \bbP\left[R-R_{\text{emp}}\geq 2\beta+\epsilon\right]\leq \exp(-2n\epsilon^2/(4n\beta+M_1)^2). 
    \end{equation}
  \end{lemma}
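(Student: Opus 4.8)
The statement is the classical uniform-stability generalization bound of Bousquet and Elisseeff~\cite{bousquet2002stability}, and the plan is to prove it by applying McDiarmid's bounded-differences inequality to the random variable $\Phi(S)\defeq R-R_{\text{emp}}=R(A_S)-R_{\text{emp}}(A_S)$, viewed as a function of the $n$ i.i.d. samples $z_1,\dots,z_n$. The argument splits into two independent ingredients: a bound on $\bbE_S[\Phi(S)]$, and a bound on how much $\Phi$ changes when a single sample is perturbed.

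First I would control the expectation. Writing $S^{i}$ for the dataset obtained from $S$ by replacing $z_i$ with an independent fresh copy $z_i'$, a measure-preserving renaming of $z_i$ and $z_i'$ gives $\bbE_{S,z_i'}[L(A_S,z_i')]=\bbE_{S,z_i'}[L(A_{S^{i}},z_i)]$, so that $\bbE_S[\Phi(S)]=\frac1n\sum_{i=1}^n \bbE[L(A_{S^{i}},z_i)-L(A_S,z_i)]$. Passing through the leave-one-out set $S^{-i}$ and combining the triangle inequality with uniform stability, $|L(A_{S^{i}},z)-L(A_S,z)|\le |L(A_{S^{i}},z)-L(A_{S^{-i}},z)|+|L(A_{S^{-i}},z)-L(A_S,z)|\le 2\beta$, which yields $\bbE_S[\Phi(S)]\le 2\beta$.

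Next I would establish the bounded-differences coefficients, i.e. a bound on $|\Phi(S)-\Phi(S^{i})|$. The true-risk part contributes at most $2\beta$, since $|R(A_S)-R(A_{S^i})|\le \bbE_{z'}|L(A_S,z')-L(A_{S^i},z')|\le 2\beta$ by the same through-$S^{-i}$ argument. The delicate part is the empirical-risk part: in $R_{\text{emp}}(A_S)-R_{\text{emp}}(A_{S^i})$ the $n-1$ off-diagonal summands (indices $j\neq i$) are each controlled by stability and contribute at most $\frac{n-1}{n}\,2\beta\le 2\beta$, whereas the single diagonal summand, in which both the hypothesis and the evaluation point change simultaneously, can only be bounded by boundedness, $\frac1n|L(A_S,z_i)-L(A_{S^i},z_i')|\le M_1/n$. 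Hence $|\Phi(S)-\Phi(S^i)|\le 4\beta+M_1/n$.

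Finally, with each bounded-differences coefficient equal to $c\defeq 4\beta+M_1/n$ and $\sum_{i=1}^n c^2=(4n\beta+M_1)^2/n$, McDiarmid's inequality gives $\bbP[\Phi-\bbE\Phi\ge\epsilon]\le \exp(-2n\epsilon^2/(4n\beta+M_1)^2)$; combining this with $\bbE\Phi\le 2\beta$ and the inclusion $\{\Phi\ge 2\beta+\epsilon\}\subseteq\{\Phi-\bbE\Phi\ge\epsilon\}$ yields the claim. I expect the main obstacle to be the bookkeeping in the empirical-risk difference, namely isolating the single diagonal term that escapes the stability bound and is controlled only by $M_1/n$, since it is precisely this term that produces the additive $M_1$ appearing inside the denominator of the exponent.
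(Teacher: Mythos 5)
Your proof is correct and complete: the expectation bound $\bbE[\Phi]\le 2\beta$ via the rename-one-sample trick, the bounded-differences coefficient $4\beta+M_1/n$ (with the diagonal term controlled only by $M_1/n$), and the final application of McDiarmid's inequality together give exactly the stated tail bound. The paper itself offers no proof of this lemma --- it is imported verbatim from Bousquet and Elisseeff \cite{bousquet2002stability} --- and your argument is precisely the standard proof given in that reference, so there is nothing to reconcile.
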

  \begin{lemma}\label{lem:concentration2}
    For any point $x_0\in \cR^{d}$ we have 
    \begin{equation} \bbP\Bigl[\Rx0(\hatwunderstarreg)-\hatR(\hatwunderstarreg)\geq \frac{4M^2}{n\lambda\sigma^{2d}}+\epsilon\Bigr] \leq \exp\Biggl(\frac{-2n\lambda^2\sigma^{4d}\epsilon^2}{(8M^2+\lambda\sigma^d+M\sqrt{\lambda\sigma^d})^2}\Biggr).
    \end{equation}
  \end{lemma}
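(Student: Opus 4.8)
The plan is to obtain this bound as a direct instantiation of the stability concentration inequality of Lemma~\eqref{lem:bousquet_bound}, for which two ingredients are required: the uniform stability coefficient $\beta$ of the LLSVM, and a uniform upper bound $M_1$ on the loss $L(y\la \hatwunderstarreg,x\ra)K(x,x_0,\sigma)$. The first ingredient is already available: Lemma~\eqref{lem:concentration} gives $\beta=O\!\left(\frac{2M^2}{n\lambda\sigma^{2d}}\right)$, so that $2\beta$ reproduces the leading term $\frac{4M^2}{n\lambda\sigma^{2d}}$ in the claim. The entire remaining work is therefore to pin down $M_1$ and then carry out the algebraic substitution.

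The key step is to bound the loss uniformly, which reduces to bounding $\|\hatwunderstarreg\|$. Here I would avoid the crude subgradient estimate coming from the primal--dual form~\eqref{eqn:primal_dual} and instead compare the regularized empirical objective at the optimum to its value at $w=0$. Since $\hatwunderstarreg$ minimizes $\hatRreg$, we have $\frac{\lambda}{2}\|\hatwunderstarreg\|^2\le \hatRreg(\hatwunderstarreg)\le \hatRreg(0)=\frac{1}{n}\sum_{i=1}^n L(0)K(x_i,x_0,\sigma)$, and because $L(0)=1$ for the hinge loss and $K\le K_m=\Theta(1/\sigma^d)$ by A2, this yields $\|\hatwunderstarreg\|=O\!\left(\frac{1}{\sqrt{\lambda\sigma^d}}\right)$. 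Feeding this into the Lipschitz bound $L(y\la \hatwunderstarreg,x\ra)\le 1+\|\hatwunderstarreg\|\,\|x\|\le 1+M\|\hatwunderstarreg\|$ and multiplying through by $K\le K_m\le 1/\sigma^d$ gives, for every $z=(x,y)$,
\[
L(y\la \hatwunderstarreg,x\ra)K(x,x_0,\sigma)\le M_1\defeq \frac{1}{\sigma^d}+O\!\left(\frac{M}{\sigma^d\sqrt{\lambda\sigma^d}}\right),
\]
so $M_1$ is a legitimate uniform bound on the loss.

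With $\beta$ and $M_1$ both in hand, the final step is purely computational. Substituting into the exponent $-2n\epsilon^2/(4n\beta+M_1)^2$ of Lemma~\eqref{lem:bousquet_bound}, I would compute $4n\beta+M_1=\frac{8M^2}{\lambda\sigma^{2d}}+\frac{1}{\sigma^d}+O\!\left(\frac{M}{\sigma^d\sqrt{\lambda\sigma^d}}\right)$ and factor out $\frac{1}{\lambda\sigma^{2d}}$, which turns the bracketed quantity into $8M^2+\lambda\sigma^d+O(M\sqrt{\lambda\sigma^d})$, using the identity $\frac{\lambda\sigma^{2d}}{\sigma^d\sqrt{\lambda\sigma^d}}=\sqrt{\lambda\sigma^d}$. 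Squaring the factored expression places $\lambda^2\sigma^{4d}$ in the numerator of the exponent and reproduces exactly the claimed denominator $(8M^2+\lambda\sigma^d+M\sqrt{\lambda\sigma^d})^2$.

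I anticipate that the main obstacle is the loss bound $M_1$ rather than the concentration machinery. The naive route through the subgradient representation in~\eqref{eqn:primal_dual} gives only $\|\hatwunderstarreg\|=O(K_m/\lambda)=O(1/(\lambda\sigma^d))$, which would degrade the $\sqrt{\lambda\sigma^d}$ scaling inside $M_1$ to something like $\lambda\sigma^{2d}$ and thereby break the stated denominator. Obtaining the correct rate hinges on the objective-at-zero comparison, which exploits the fact that the regularizer alone already controls $\|\hatwunderstarreg\|$. Once $M_1$ has this sharper form, the rest is bookkeeping, and the inequality follows immediately from Lemma~\eqref{lem:bousquet_bound}.
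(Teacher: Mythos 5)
Your proposal is correct and follows essentially the same route as the paper: instantiate the stability bound of Lemma~\eqref{lem:bousquet_bound} with the $\beta=O\bigl(\tfrac{2M^2}{n\lambda\sigma^{2d}}\bigr)$ from Lemma~\eqref{lem:concentration} and with $M_1=O(\tfrac{1}{\sigma^d})+O\bigl(\tfrac{M}{\sigma^d\sqrt{\lambda\sigma^d}}\bigr)$ obtained from the $1$-Lipschitzness of the hinge loss together with the norm bound $\|\hatwunderstarreg\|=O(1/\sqrt{\lambda\sigma^d})$, which the paper also derives by comparing the regularized empirical objective at $\hatwunderstarreg$ against its value at $w=0$. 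Your observation that the cruder subgradient-based norm bound would fail to reproduce the stated denominator is a correct and worthwhile sanity check, but it does not constitute a departure from the paper's argument.
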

  \begin{proof}
    The desired result follows from  lemmas \eqref{lem:concentration}-\eqref{lem:bousquet_bound} and by substituting  $\hatR(\hatwunderstarreg)$ for $R_{\text{emp}}$ and $\Rx0(\hatwunderstarreg)$ for $R$ in lemma ~\eqref{lem:bousquet_bound}, and by susbtituting $M_1= O(\frac{1}{\sigma^d})+O(\frac{M}{\sigma^d\sqrt{\lambda\sigma^d}})$, which was obtained by using the fact that hinge loss is 1-Lipschitz.
  \end{proof}
  \vspace{-10pt}
  \textbf{Proof of Theorem~\eqref{thm:main}.}
  The proof is in two parts. In the first part we shall prove that under the conditions stated in the premise of the theorem $\R01(\hatwunderstarreg)\rightarrow \R01(f_B)$ a.s. The second part then uses this almost sure convergene of local risk to guarantee that $\hatwunderstarreg$ and $f_B$ agree on the label of $x_0$. Fix any $\epsilon>0$. Let $\delta_{n,\lambda,\sigma}^{(1)}\defeq\exp\left(\frac{-\epsilon^2n\sigma^{2d}}{2(1+M\sqrt{\frac{2}{\lambda}\bbE K(x,x_0,\sigma)})^2}\right), \delta_{n,\lambda,\sigma}^{(2)}\defeq\exp\left(\frac{-2n\lambda^2\sigma^{4d}\epsilon^2}{(8M^2+\lambda\sigma^{d}+M\sqrt{\lambda\sigma^d})^2}\right)$. Define $\delta_{n,\lambda,\sigma}\defeq \delta_{n,\lambda,\sigma}^{(1)}+\delta_{n,\lambda,\sigma}^{(2)}$. For appropriately chosen values of $\sigma(\epsilon),\lambda(\sigma(\epsilon))$ we have with probability atleast $1-\delta_{n,\lambda,\sigma}$
\vspace{-5pt}
  \begin{multline}
        \label{eqn:crucialbig}
	\Rreg(\hatwunderstarreg)= \frac{\lambda}{2}||\hatwunderstarreg||^2+\Rx0(\hatwunderstarreg)
        \leqa  \frac{\lambda}{2}||\hatwunderstarreg||^2+\hatR(\hatwunderstarreg)+\frac{4M^2}{n\lambda\sigma^{2d}}+\epsilon\leqb \frac{\lambda}{2}||\wunderstarreg||^2
        \\\shoveleft +\hatR(\wunderstarreg)+\frac{4M^2}{n\lambda\sigma^{2d}}+\epsilon
       \leqc \frac{\lambda}{2}||\wunderstarreg||^2+\Rx0(\wunderstarreg)+\frac{4M^2}{n\lambda\sigma^{2d}}+2\epsilon=\Rreg(\wunderstarreg)+\frac{4M^2}{n\lambda\sigma^{2d}}+2\epsilon
        \\
        \leqd \inf_{f\in C^1}\Rx0(f)+\frac{4M^2}{n\lambda\sigma^{2d}}+4\epsilon+\frac{\lambda}{2}||w_{\epsilon,\sigma}||^2+\bbE (o(||x-x_0||)K(x,x_0,\sigma)).
  \end{multline}
  In the above equations step (a) follows from lemma~\eqref{lem:concentration2}, and hence there is a failure probability of at most $\delta_{n,\lambda,\sigma}^{(1)}$. Step (b) follows from the fact that $\hatwunderstarreg$ is the minimizer of $\Rreg$, and step (c) uses the Hoeffding inequality, and incurs a failure probability of $\delta_{n,\lambda,\sigma}^{(2)}$. Choosing small enough $\sigma(\epsilon), \lambda(\sigma(\epsilon),\epsilon),$ inequality (d) follows from lemma~\eqref{lem:continuity}. Applying lemma~\eqref{lem:L_is_good} we get with probability atleast $1-\delta_{n,\lambda,\sigma}$,
\vspace{-10pt}
  \begin{multline*}
    \R01(\hatwunderstarreg)-\inf_f\R01(f)\leq \Rx0(\hatwunderstarreg)-\inf_{f\in C^1}\Rx0\leq \Rreg(\hatwunderstarreg)-\inf_{f\in C^1}\Rx0(f)
    \leqa \frac{4M^2}{n\lambda\sigma^{2d}}+\\
    4\epsilon+\frac{\lambda}{2}||w_{\epsilon,\sigma}||^2+\bbE (o(||x-x_0||)K(x,x_0,\sigma)).
  \end{multline*}
  Step (a) follows from equation~\eqref{eqn:crucialbig}, and the fact that the marginal distribution on $\cX$ is absolutely continuous. The absolute continuity gurantees that $\frac{\lambda}{2}||w_{\epsilon,\sigma}||^2\rightarrow 0$. If $n\rightarrow \infty,\lambda\rightarrow 0,\sigma\rightarrow 0, n\lambda^2\sigma^{4d}\rightarrow \infty$ we conclude that  $\R01(\hatwunderstarreg)\rightarrow \inf_{f\in C^1}\R01(f)=\R01(f_B)$ in probability. Since for data-dependent choices of $\lambda,\sigma$ that satisfy $\lambda,\sigma\rightarrow 0, \frac{n\lambda^2\sigma^{4d}}{\log^{1+\epsilon}(n)}\rightarrow \infty$, we get $\sum_{n=1}^{\infty} \delta_{n,\lambda,\sigma}<\infty$ , hence by Borel-Cantelli lemma the convegence $\R01(\hatwunderstarreg)\rightarrow \inf_f\R01(f)=\R01(f_B)$ also happens almost surely. 
\newline\newline
\vspace{-10pt}
\begin{wrapfigure}{L}{0.30\textwidth}
  \label{fig:myfig}
  \centering
  \includegraphics[scale=0.30]{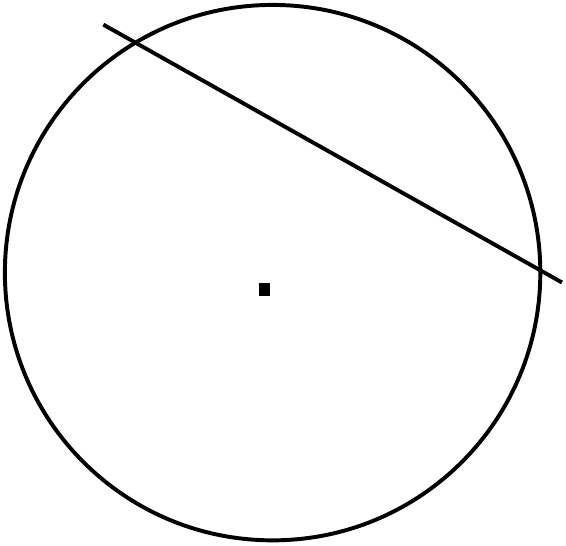}
  \caption{\footnotesize{All the points in this ball, of radius $\sigma$, centered around $x_0$ are labeled +1 by the Bayes classifier. The region of intersection between the hyperplane, and the ball, which contains the center, $x_0$, is misclassified by the hyperplane. The volume of this region is at least half of the volume of the ball.}}
    \end{wrapfigure}
  We shall now prove the second part. If $\eta(x_0)=1/2$, then the prediction of LLSVMs at point $x_0$ is irrelevant. Hence let $\eta(x_0)>1/2$. The proof is the same if $\eta(x_0)<1/2$. Choose $\sigma_1$ such that $\inf_{x\in \bbB(x_0,\sigma_1)}2\eta(x)-1\geq \frac{2\eta(x_0)-1}{2}$. Notice that because of continuity $2\eta(x)-1$ has the same sign everywhere in $\bbB(x_0,\sigma_1)$ (see Figure (1)). From A5 we are guaranteed that there exists $\sigma_2>0$ such that for all $0<\sigma\leq \sigma_2$, we have $\inf_{H_{\sigma}}\bbE K(x,x_0,\sigma)\bbm1_{H_{\sigma}}\geq \frac{c'_{x_0}}{2}$. Let $0<\sigma_0\leq \min\{\sigma_1,\sigma_2\}$. Now from the first part of the proof we know that $\R01(\hatwunderstarreg)\rightarrow \R01(f_B)$ almost surely. This guarantees that there exists a sufficiently large $n_0$ such that for appropriate $\sigma\leq \sigma_0$, and an appropriate choice of $\lambda$, we get 
    \begin{equation}\label{eqn:asguar}
      \bbP[\R01(\hatwunderstarreg)-\R01(f_B)\leq c'_{x_0}|2\eta(x_0)-1|/8]=1.
    \end{equation}
    Now for the above choice of $n_0,\lambda,\sigma$, represent by $\Delta$ the region of disagreement between $\hatwunderstarreg$ and $f_B$. Assume that $x_0\in \Delta$. Since $2\eta(x)-1$ has the same sign everywhere in $\bbB(x_0,\sigma)$, we get $\Delta=\{x\in\bbB(x_0,\sigma)|\la\hatwunderstarreg,x\ra\leq 0\}$, and hence the volume of $\Delta$ is at least half of $\bbB(x_0,\sigma)$. Hence
    \vspace{-5pt}
    \begin{multline}
      \R01(\hatwunderstarreg)-\R01(f_B)=\bbE |2\eta(x)-1|K(x,x_0,\sigma)\bbm1_{\Delta}
      \geq \frac{2\eta(x_0)-1}{2}\bbE K(x,x_0,\sigma)\bbm1_{\Delta}\\
      \geq \frac{2\eta(x_0)-1}{2}\inf_{H_{\sigma}}\bbE K(x,x_0,\sigma)\bbm1_{H_{\sigma}}
      \geq \frac{(2\eta(x_0)-1)c_{x_0}'}{4},
    \end{multline}
    which is a contradiction to equation~\eqref{eqn:asguar}. Hence $f_{B}$ and $\hatwunderstarreg$ agree on the label of $x_0$.
\vspace{-10pt}
    \section{Risk Bounds and Rates of Convergence to Stochastic Objective.}\label{sec:theory_contrib}
    LLSVMs solves a local optimization problem that can be seen as minimizing an empirical version of the stochastic objective $\Rreg(w)$. It is then natural to ask as to how quickly does the value of the stochastic objective for $w=\hatwunderstarreg$ converge to the minima of the stochastic objective? In Theorem~\eqref{thm:stochastic} we demonstrate, via stability arguments, that for an arbitrary test point $x_0$, this convergence happens at the rate of $O(1/\sqrt{n}\lambda\sigma^{2d})$. In Theorem~\eqref{thm:risk_bound_for_llsvm} we establish generalization bounds for a global classifier learnt by solving LLSVM's at any randomly chosen point $x$, in terms of the empirical error of LLSVMs. Due to lack of space the proofs are postponed to the supplement.
    %The proof requires establishing an upper bound on the uniform stability of local SVMs w.r.t. the loss function 
    %$q(w,z)\defeq\frac{\lambda}{2}||w||^2+L(y\langle w,x\rangle)K(x,x_0,\sigma)-\\ {}{}{}{}\Bigl [\frac{\lambda}{2}||\wunderstarreg||^2+L(y\langle \wunderstarreg,x\rangle)K(x,x_0,\sigma)\Bigr]$,
    %followed by the application of Lemma ~\ref{lem:bousquet_bound}. 
    \begin{theorem}\label{thm:stochastic}
      With probability at least $1-\delta$ over the random input training set we have 
      \begin{equation} \Rreg(\hatwunderstarreg)-\Rreg(\wunderstarreg)\leq \tilde{O}\left(\frac{1}{\sqrt{n}\lambda\sigma^{2d}}\right). %2\beta+(4n\beta+M_1)\sqrt{\frac{\mathrm{ln}(\frac{1}{\delta}}{2n}},
      \end{equation}
    \end{theorem}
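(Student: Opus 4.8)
The plan is to bound the excess regularized local risk by the textbook regularized-ERM decomposition and then to feed the pieces into the stability estimate already proved in Lemma~\eqref{lem:concentration2}. Writing $\Rx0$ and $\hatR$ for the population and empirical \emph{unregularized} local risks, I would split
\begin{equation*}
\Rreg(\hatwunderstarreg)-\Rreg(\wunderstarreg)=\bigl[\Rx0(\hatwunderstarreg)-\hatR(\hatwunderstarreg)\bigr]+\bigl[\hatRreg(\hatwunderstarreg)-\hatRreg(\wunderstarreg)\bigr]+\bigl[\hatR(\wunderstarreg)-\Rx0(\wunderstarreg)\bigr],
\end{equation*}
where the $\frac{\lambda}{2}\|\cdot\|^2$ terms cancel inside the first and third brackets because $\hatwunderstarreg$ (resp.\ $\wunderstarreg$) carries the same regularizer in $\Rreg$ and in $\hatRreg$. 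This is exactly the telescoping already used in steps (a)--(c) of equation~\eqref{eqn:crucialbig}; the only new work is to stop before step (d) and to convert the tail probabilities into an explicit rate in $\delta$.

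The three brackets are then handled separately. The middle bracket is nonpositive by optimality of $\hatwunderstarreg$ for $\hatRreg$. The first bracket is precisely the quantity controlled by Lemma~\eqref{lem:concentration2}, which rests on the uniform-stability estimate of Lemma~\eqref{lem:concentration}: with probability at least $1-\delta^{(2)}$ it is at most $\frac{4M^2}{n\lambda\sigma^{2d}}+\epsilon_2$. The third bracket involves the deterministic vector $\wunderstarreg$, so $\hatR(\wunderstarreg)$ is an average of $n$ i.i.d.\ bounded summands and a Hoeffding bound gives it size $\epsilon_1$ with probability at least $1-\delta^{(1)}$; to make the summand range explicit I would first bound $\|\wunderstarreg\|\le\sqrt{(2/\lambda)\,\bbE K(x,x_0,\sigma)}$ using $\Rreg(\wunderstarreg)\le\Rreg(0)=\bbE K(x,x_0,\sigma)$ together with $L(0)=1$, and then use $K\le K_m=\Theta(\sigma^{-d})$ and the Lipschitzness of $L$ to bound each summand by $\tilde{O}(\sigma^{-d}(1+M\|\wunderstarreg\|))$.

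Combining the three bounds by a union bound, setting $\delta=\delta^{(1)}+\delta^{(2)}$, and inverting the two exponential tails for $\epsilon_1,\epsilon_2$ yields the claimed high-probability inequality. The only quantitative point that needs checking is that the stability contribution dominates: inverting $\delta^{(2)}$ gives $\epsilon_2=\tilde{O}(1/(\sqrt{n}\lambda\sigma^{2d}))$, the deterministic term $\frac{4M^2}{n\lambda\sigma^{2d}}$ is smaller by a factor $\sqrt{n}$, and inverting $\delta^{(1)}$ gives $\epsilon_1=\tilde{O}(1/(\sqrt{n}\sqrt{\lambda}\,\sigma^{3d/2}))$, which is smaller than $\epsilon_2$ by the vanishing factor $\sqrt{\lambda}\,\sigma^{d/2}$. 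I expect this rate bookkeeping, rather than any new probabilistic argument, to be the main (though mild) obstacle: the conceptual heavy lifting is already done by the uniform-stability Lemma~\eqref{lem:concentration}, and the present statement is in essence a repackaging of the first half of the proof of Theorem~\eqref{thm:main} with the $\epsilon$'s expressed as functions of the confidence level.
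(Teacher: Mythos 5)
Your argument is correct and yields the right rate, but it is not the route the paper's own proof of this theorem takes. You recycle the machinery of Theorem~\eqref{thm:main}: the three-bracket split is exactly steps (a)--(c) of equation~\eqref{eqn:crucialbig}, with Lemma~\eqref{lem:concentration2} controlling $\Rx0(\hatwunderstarreg)-\hatR(\hatwunderstarreg)$, optimality of $\hatwunderstarreg$ killing the middle bracket, a Hoeffding bound (with $||\wunderstarreg||\leq\sqrt{(2/\lambda)\bbE K(x,x_0,\sigma)}$) handling the fixed vector $\wunderstarreg$, and a union bound over the two events; your bookkeeping that $\epsilon_2=\tilde{O}(1/(\sqrt{n}\lambda\sigma^{2d}))$ dominates both the deterministic $4M^2/(n\lambda\sigma^{2d})$ term and $\epsilon_1=\tilde{O}(1/(\sqrt{n}\sqrt{\lambda}\sigma^{3d/2}))$ checks out (the latter needs only that $\lambda\sigma^{d}$ stays bounded). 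The paper instead makes a \emph{single} application of the expectation-form Bousquet--Elisseeff stability bound to the shifted loss $q(w,z)=\frac{\lambda}{2}||w||^2+L(y\langle w,x\rangle)K(x,x_0,\sigma)-\frac{\lambda}{2}||\wunderstarreg||^2-L(y\langle \wunderstarreg,x\rangle)K(x,x_0,\sigma)$, whose expectation at $\hatwunderstarreg$ is precisely $\Rreg(\hatwunderstarreg)-\Rreg(\wunderstarreg)$ and whose empirical mean is $\hatRreg(\hatwunderstarreg)-\hatRreg(\wunderstarreg)\leq 0$ by optimality; it then re-derives uniform stability for $q$ (regularizer included) via a strong-convexity sandwich, obtaining $\beta$ and the range $M_1$ in terms of $\sum_{j}K(x_j,x_0,\sigma)$. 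What the paper's route buys is a single probabilistic event -- the fluctuation of $\hatR(\wunderstarreg)$ is absorbed into the range $M_1$ of $q$ rather than needing a separate Hoeffding step and union bound -- together with data-dependent constants; what your route buys is modularity, since no stability computation beyond Lemmas~\eqref{lem:concentration}--\eqref{lem:concentration2} is required. Both arguments ultimately rest on the same core estimate $||\hatwunderstarreg-\hatwunderstarminusireg||\leq 2MK_m/(n\lambda)$ and give the same $\tilde{O}\left(\frac{1}{\sqrt{n}\lambda\sigma^{2d}}\right)$ bound.
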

\vspace{-10pt}
\textbf{Discussion of theorem~\eqref{thm:stochastic}}. In theorem~\eqref{thm:stochastic}, it might be possible to improve the dependence on $n$ from $\frac{1}{\sqrt{n}}$ to $\frac{1}{n}$ via the peeling idea~\cite{sridharan2008fast}. Based on ~\cite{sridharan2008fast}, we conjecture that the dependence on $\lambda$ is optimal, while the dependence on $\sigma$ may be improved from $1/\sigma^{2d}$ to $1/\sigma^d$.
    \begin{theorem}\label{thm:risk_bound_for_llsvm}
      Let $\hatwunderstarreg(x)$ be the vector obtained by solving the LLSVM problem, with parameters $\lambda,\sigma$, at a randomly drawn point $x$. With probability at least $1-\delta$ over the random sample, we have
      \begin{equation*}
	\bbE L(y\langle\hatwunderstarreg(x),x\rangle)\leq \frac{1}{n}\sum_{i=1}^n L(y_i\langle \hatwunderstarreg(x_i),x_i\rangle)+\frac{4M^2}{n\lambda\sigma^d}+(1+O(M\sqrt{1/\lambda\sigma^d}))\sqrt{\mathrm{ln}(1/\delta)/2n}.
      \end{equation*}
    \end{theorem}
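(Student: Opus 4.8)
The plan is to recognize the \emph{global} predictor built from LLSVMs as a single learning algorithm, and then apply the uniform-stability machinery of Lemma~\eqref{lem:bousquet_bound} exactly as in the pointwise case, but with the center of the local problem now playing the role of the (fresh or training) input. Concretely, let $A$ be the algorithm that, given a sample $S$, returns the map $x\mapsto \langle \hatwunderstarreg(x),x\rangle$, where $\hatwunderstarreg(x)$ solves problem~\eqref{eqn:llsvm_opt} with $x_0=x$; the associated loss at a point $z=(x,y)$ is $\ell(A_S,z)=L(y\langle\hatwunderstarreg(x),x\rangle)$. The crucial difference from Lemma~\eqref{lem:concentration} is that this loss carries \emph{no} smoothing-kernel factor $K(\cdot,x,\sigma)$, since we evaluate the local classifier at its own center; this is precisely what turns the $\sigma^{2d}$ of the pointwise stability into $\sigma^{d}$ here.

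First I would bound the uniform stability $\beta$ of $A$. For a fixed query center $x$, removing one training example $(x_i,y_i)$ perturbs the local solution by $\|\hatwunderstarreg(x)-\hatwunderstarminusireg(x)\|\leq 2MK_m/(n\lambda)=O(M/(n\lambda\sigma^d))$; this is the perturbation estimate already established in the proof of Lemma~\eqref{lem:concentration} (equation~\eqref{eqn:bound_on_w}), and since that bound is uniform in the center it holds simultaneously for every $x$. Combining with the $1$-Lipschitzness of the hinge loss and $\|\bar x\|=O(M)$ gives $|\ell(A_S,z)-\ell(A_{S^{-i}},z)|\leq M\|\hatwunderstarreg(x)-\hatwunderstarminusireg(x)\|=O(M^2/(n\lambda\sigma^d))=:\beta$. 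Next I would bound the loss range $M_1$: comparing the objective at $\hatwunderstarreg(x)$ against $w=0$ gives $\tfrac{\lambda}{2}\|\hatwunderstarreg(x)\|^2\leq \hatRreg(0)\leq K_m=O(1/\sigma^d)$, so $\|\hatwunderstarreg(x)\|=O(1/\sqrt{\lambda\sigma^d})$ and hence $0\leq L(y\langle\hatwunderstarreg(x),x\rangle)\leq 1+M\|\hatwunderstarreg(x)\|=1+O(M\sqrt{1/\lambda\sigma^d})=:M_1$, again uniformly in $x$. Plugging $\beta$ and $M_1$ into Lemma~\eqref{lem:bousquet_bound}, identifying $R=\bbE L(y\langle\hatwunderstarreg(x),x\rangle)$ and $R_{\text{emp}}=\tfrac1n\sum_i L(y_i\langle\hatwunderstarreg(x_i),x_i\rangle)$, setting the failure probability to $\delta$ and inverting the tail bound for $\epsilon$ yields the stated inequality: the $2\beta$ term becomes $\tfrac{4M^2}{n\lambda\sigma^d}$ and the $M_1$-scaled deviation becomes $(1+O(M\sqrt{1/\lambda\sigma^d}))\sqrt{\ln(1/\delta)/2n}$.

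The main obstacle is conceptual rather than computational: one must correctly identify the learned object as the \emph{entire} map $x\mapsto\hatwunderstarreg(x)$ and verify that one-point removal perturbs this map by at most $\beta$ \emph{uniformly over all query centers} $x$, not merely at a single fixed $x_0$ as in Lemma~\eqref{lem:concentration}. One must also be careful that the empirical term evaluates, for each training input $x_i$, the local solution $\hatwunderstarreg(x_i)$ trained on the full sample $S$ (including $(x_i,y_i)$), which is exactly the quantity that the symmetric Bousquet--Elisseeff argument controls. Checking that the uniform-in-center perturbation bound is what makes the generalization argument go through — despite the hypothesis itself being sample-dependent in a query-dependent way — is the delicate part of the proof.
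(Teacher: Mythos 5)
Your proposal follows essentially the same route as the paper's own proof: treat the map $x\mapsto\hatwunderstarreg(x)$ as the learned object, reuse the one-point perturbation bound $\|\hatwunderstarreg(x)-\hatwunderstarminusireg(x)\|\leq 2MK_m/(n\lambda)$ from Lemma~\eqref{lem:concentration} (now without the extra kernel factor in the loss, giving $\beta=O(M^2/(n\lambda\sigma^d))$), bound the loss range by $M_1=1+O(M\sqrt{1/\lambda\sigma^d})$ via comparison with $w=0$, and plug both into Lemma~\eqref{lem:bousquet_bound}. The only blemish --- reporting the deviation term as $M_1\sqrt{\ln(1/\delta)/2n}$ while the lemma actually yields $(4n\beta+M_1)\sqrt{\ln(1/\delta)/2n}$, whose $4n\beta=O(M^2/(\lambda\sigma^d))$ part is not absorbed by the stated $O(M\sqrt{1/\lambda\sigma^d})$ --- is one your write-up shares with the paper itself, so it is not a gap in your argument relative to theirs.
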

\vspace{-10pt}
    \textbf{Discussion of Theorem~\eqref{thm:risk_bound_for_llsvm}}. Without any further noise assumptions, the dependence on $n,\lambda,\sigma$ is optimal. With the Tsybakov's~\cite{boucheron2005theory} noise assumption, it is possible to improve the dependence on $n$.  The exponential dependence on $d$ is expected, and is typical of nonparametric methods.
    \vspace{-10pt} 
\section{Proofs of Theorems 8,9}
\begin{comment}
  \begin{proposition}\label{prop:elem}
    \begin{equation}
      ||\hatwunderstarreg|| = O\left(\sqrt{\frac{1}{\lambda\sigma^d}}\right).
    \end{equation}
  \end{proposition}
  \begin{proof}
    By definition 
    \begin{equation}
      \hatwunderstarreg=\arg \underset{w}{\min}~ \frac{\lambda}{2}||w||^2+\frac{1}{n}\sum_{j=1}^n L(y_j\langle w,x_j \rangle)K(x_j,x_0,\sigma).
    \end{equation}
    As a result we get
    \begin{equation}
      \frac{\lambda}{2}||\hatwunderstarreg||^2\leq \frac{1}{n}\sum_{j=1}^n L(0)K(x_j,x_0,\sigma)= O\left(\frac{1}{\sigma^d}\right). \qed
    \end{equation}
  \end{proof}
%%%%%%%%%%%%%%%%%%%%%%%%%%%%%%%%%%%%%%%%%%%%%%%%%%%%%%%%%%%%%%%%%%%%%%%
\setcounter{theorem}{5}
%%%%%%%%%%%%%%%%%%%%%%%%%%%%%%%%%%%%%%%%%%%%%%%%%%%%%%%%%%%%%%%%%%%
  \begin{lemma}
    \label{lem:L_bound}
     Under A1 and A3 we have $L(y\langle \hatwunderstarreg,x\rangle)K(x,x_0,\sigma)\leq \frac{L(0)}{\sigma^d}+O\left(\frac{M}{\sigma^d\sqrt{\lambda\sigma^d}}\right)$.
  \end{lemma}
  \begin{proof} Making use of the Lipschitz property of the convex loss function $L(\cdot)$ we get 
\begin{multline}
      L(y\langle\hatwunderstarreg,x\rangle)\leq  L(0)+\abs{\langle \hatwunderstarreg,x\rangle}\leq L(0)+||\hatwunderstarreg||~||x||\\ \leq L(0)+O(M/\sqrt{\lambda\sigma^d}).
\end{multline}
Here the second inequality follows by Cauchy-Schwartz inequality and the last inequality by the use of Proposition~(\ref{prop:elem}).
Multiplying both sides by $K(x,x_0,\sigma)$ and noting that under  $K(\cdot,\cdot,\sigma)\leq O(\frac{1}{\sigma^d})$, we get the desired result.
  \end{proof}
\end{comment}
For convenience we shall begin with a risk bound from~\cite{bousquet2002stability}. This risk bound relies on the notion of uniform stability. For any learning algorithm A that learns a function $A_{S}$ after having trained on the dataset $S$ the uniform stability quantifies the absolute maginitude of the change in loss suffered by the algorithm at any arbitrary point in the space if an arbitrary $x_i$ is removed from the training dataset. The precise definition is as follows 
\begin{definition}\label{def:defn_uniform_stability}~\cite{bousquet2002stability}
  An algorithm A has uniform stability $\beta$ w.r.t the loss function $L$ if:
  \begin{equation}
    \forall S , \forall i\in \{1,\ldots,n\}, ||L(A_S,\cdot)-L(A_{S_{-i}},\cdot)||_{\infty}\leq \beta
  \end{equation}
\end{definition}
  \begin{lemma}\label{lem:bousquet_bound}
    Let $A$ be an algorithm with uniform stability $\beta$ w.r.t a loss function $0\leq L(A_{S},(x,y))\leq M_1$, for all $z\defeq(x,y)$ and all set S. Then for any $n\geq 1$, and any $\delta\in (0,1)$, the following bound holds true with probability atleast $1-\delta$ over the random draw of the sample S.
    \begin{equation}
      \bbE_{z\sim \cD}L(A_{S},z)\leq \frac{1}{n}\sum_{i=1}^n L(A_s,z_i)+2\beta+(4n\beta+M_1)\sqrt{\frac{log(1/\delta)}{2n}}.
    \end{equation}
  \end{lemma}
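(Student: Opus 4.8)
The statement is the classical uniform-stability generalization bound of Bousquet and Elisseeff, so the plan is to reproduce that argument through the bounded-differences (McDiarmid) inequality. Write $R(A_S)\defeq \bbE_{z\sim\cD}L(A_S,z)$ for the risk and $R_{\mathrm{emp}}(A_S)\defeq\frac1n\sum_{i=1}^n L(A_S,z_i)$ for the empirical risk, and define the random variable $\Phi(S)\defeq R(A_S)-R_{\mathrm{emp}}(A_S)$. The target inequality is exactly the statement that $\Phi(S)\le 2\beta+(4n\beta+M_1)\sqrt{\ln(1/\delta)/2n}$ with probability at least $1-\delta$, so it suffices to (i) bound $\bbE_S[\Phi(S)]$ and (ii) show that $\Phi$ concentrates around its mean.

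For step (i) I would show $\bbE_S[\Phi(S)]\le 2\beta$ by a ghost-sample renaming trick. Let $z_i'$ be an i.i.d.\ copy of $z_i$ and let $S^i$ denote $S$ with $z_i$ replaced by $z_i'$. Because $z_i$ and $z_i'$ are exchangeable, swapping their labels gives $\bbE_S[R(A_S)]=\bbE_{S,z'}\frac1n\sum_{i=1}^n L(A_{S^i},z_i)$, while $\bbE_S[R_{\mathrm{emp}}(A_S)]=\bbE_S\frac1n\sum_{i=1}^n L(A_S,z_i)$; subtracting yields $\bbE_S[\Phi(S)]=\bbE_{S,z'}\frac1n\sum_i\big(L(A_{S^i},z_i)-L(A_S,z_i)\big)$. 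Bounding each summand by passing through the leave-one-out hypothesis $A_{S^{-i}}$ and invoking uniform stability twice gives $|L(A_{S^i},z_i)-L(A_S,z_i)|\le 2\beta$, hence $\bbE_S[\Phi(S)]\le 2\beta$.

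For step (ii) I would compute the bounded-difference constant of $\Phi$ under a single-coordinate change $S\mapsto S^i$. The risk term changes by at most $2\beta$, again via $\bbE_z|L(A_S,z)-L(A_{S^i},z)|\le 2\beta$ from stability. The empirical term is the delicate one: replacing $z_i$ perturbs the learned function on all $n$ evaluation points, contributing $\frac1n\sum_{j\neq i}|L(A_S,z_j)-L(A_{S^i},z_j)|\le 2\beta$ by stability, and additionally changes the single point at which the $i$-th loss is evaluated, contributing at most $M_1/n$ from the boundedness $0\le L\le M_1$. Summing these gives $|\Phi(S)-\Phi(S^i)|\le 4\beta+M_1/n$. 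McDiarmid's inequality then yields $\bbP[\Phi(S)-\bbE_S\Phi(S)\ge\epsilon]\le\exp(-2n\epsilon^2/(4n\beta+M_1)^2)$; setting the right-hand side equal to $\delta$, solving for $\epsilon$, and adding the mean bound $2\beta$ from step (i) produces the claimed inequality.

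The main obstacle is the empirical-risk bounded-difference estimate in step (ii): one must carefully separate the two distinct effects of swapping a training point---the indirect perturbation of the hypothesis $A_S\to A_{S^i}$ that is felt across every summand (handled by stability and aggregating to $O(\beta)$ rather than $O(n\beta)$), and the direct change of the single evaluation point $z_i\to z_i'$ (handled by the crude bound $M_1$ applied to a $1/n$ fraction). Keeping these contributions separate is precisely what produces the $4n\beta+M_1$ factor instead of a vacuous $O(n\beta)$ term, and getting this accounting right is the crux of the proof.
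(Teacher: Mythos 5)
Your proof is correct: this lemma is quoted in the paper directly from the cited reference (Bousquet and Elisseeff, 2002) with no proof supplied, and your reconstruction is exactly the standard argument from that source --- bound $\bbE_S[\Phi(S)]\leq 2\beta$ by the ghost-sample renaming trick, establish the bounded-difference constant $4\beta+M_1/n$ by separating the hypothesis-perturbation effect (handled by stability, passing through the leave-one-out hypothesis since the paper's Definition~1 is stated for point removal) from the single-evaluation-point change (handled by the bound $M_1$), and apply McDiarmid's inequality. The accounting you flag as the crux is indeed what yields the $(4n\beta+M_1)$ factor, and your constants match the stated bound exactly.
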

%%%%%%%%%%%%%%%%%%%%%%%%%%%%%%%%%%%%%%
\setcounter{theorem}{7}
%%%%%%%%%%%%%%%%%%%%%%%%%%%%%%%%%%%%%%%%%%
  \begin{theorem}
\begin{comment}
    Suppose
\begin{equation}\label{eqn:defn_F}
  \Rreg (w)\defeq \frac{\lambda}{2}||w||^2+\bbE_{x,y\sim \cD} L(y_i\langle w,x_i\rangle)K(x_i,x_0,\sigma)
\end{equation}
\begin{equation}\label{eqn:defn_F_hat}
  \hatRreg(w)\defeq\frac{\lambda}{2}||w||^2+\frac{1}{n}\sum_{i=1}^n L(y_i\langle w,x_i\rangle)K(x_i,x_0,\sigma).
\end{equation}
\begin{equation}\label{eqn:defn_wstar}
  \wstarLx0lambdasigma=\arg\min_{w}\Rreg
\end{equation}
\begin{equation}\label{eqn:defn_hatw}
  \hatwunderstarreg=\arg \underset{w}{\min}\quad\frac{\lambda}{2}||w||^2+\frac{1}{n}\sum_{i=1}^n L(y_i\langle w,x_i\rangle)K(x_i,x_0,\sigma)
\end{equation}
\end{comment}
With probability $1-\delta$ over the random input training set we have 
\begin{equation}
  \Rreg(\hatwunderstarreg)-\Rreg(\wunderstarreg)\leq 2\beta+(4n\beta+M_1)\sqrt{\frac{log(\frac{1}{\delta})}{2n}},
\end{equation}
where 
\begin{align}
\beta&\leq \frac{2MK_{m}}{n\lambda}\Biggl[\sqrt{\frac{2\lambda L(0)}{n}}\sqrt{\sum_{j=1}^n K(x_j,x_0,\sigma)}+MK_{m}\Biggr]\\
M_1&\leq\frac{L(0)}{n}\sum_{j=1}^n K(x_j,x_0,\sigma)+L(0)K_m+K_mM\sqrt{\frac{2L(0)}{n\lambda}\sum_{j=1}^n K(x_j,x_0,\sigma)}
\end{align} 
  \end{theorem}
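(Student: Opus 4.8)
The plan is to obtain the bound as a direct application of the uniform-stability generalization bound of Lemma~\ref{lem:bousquet_bound} to the LLSVM with loss $\ell(w,z)\defeq L(y\langle w,x\rangle)K(x,x_0,\sigma)$. First I would reduce the excess regularized risk to a generalization gap. Since $\hatwunderstarreg$ minimizes the regularized empirical objective $\hatRreg$, we have $\hatRreg(\hatwunderstarreg)\le\hatRreg(\wunderstarreg)$, so writing
\[
\Rreg(\hatwunderstarreg)-\Rreg(\wunderstarreg)=\bigl[\Rreg(\hatwunderstarreg)-\hatRreg(\hatwunderstarreg)\bigr]+\bigl[\hatRreg(\hatwunderstarreg)-\hatRreg(\wunderstarreg)\bigr]+\bigl[\hatRreg(\wunderstarreg)-\Rreg(\wunderstarreg)\bigr]
\]
and discarding the nonpositive middle bracket, the regularizer $\frac{\lambda}{2}\|w\|^2$ cancels inside each remaining bracket. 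The first becomes $\Rx0(\hatwunderstarreg)-\hatR(\hatwunderstarreg)$, exactly the generalization gap $R-R_{\mathrm{emp}}$ of the LLSVM on $\ell$; the second, $\hatR(\wunderstarreg)-\Rx0(\wunderstarreg)$, is an empirical-minus-population deviation at the fixed, data-independent vector $\wunderstarreg$, which I would control by Hoeffding's inequality.

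To bound the generalization gap I would invoke Lemma~\ref{lem:bousquet_bound}, which needs the uniform stability $\beta$ and a uniform bound $M_1$ on $\ell(\hatwunderstarreg,\cdot)$. For $M_1$ I would first bound the solution norm by comparing $\hatRreg(\hatwunderstarreg)$ to its value at $w=0$: this gives $\frac{\lambda}{2}\|\hatwunderstarreg\|^2\le\frac{1}{n}\sum_j L(0)K(x_j,x_0,\sigma)$, hence $\|\hatwunderstarreg\|\le\sqrt{\tfrac{2L(0)}{n\lambda}\sum_j K(x_j,x_0,\sigma)}$. The $1$-Lipschitz property of the hinge loss with $\|x\|\le M$ and $K\le K_m$ then yields $\ell(\hatwunderstarreg,z)\le\bigl(L(0)+M\|\hatwunderstarreg\|\bigr)K_m$, which supplies the $L(0)K_m$ and $K_mM\|\hatwunderstarreg\|$ contributions to $M_1$; the remaining $\frac{L(0)}{n}\sum_j K(x_j,x_0,\sigma)$ term—equal to the empirical objective at $w=0$ and hence an upper bound on $\hatR(\hatwunderstarreg)$—accounts for the fixed-point empirical contribution isolated in the reduction step, so that both pieces fold into a single bound of the advertised form.

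For $\beta$ I would sharpen Lemma~\ref{lem:concentration}. Since $\ell(\cdot,z)$ is $(MK_m)$-Lipschitz in $w$, we have $\beta\le MK_m\,\|\hatwunderstarreg-\hatwunderstarminusireg\|$, so it suffices to refine the perturbation bound. Following the proof of Lemma~\ref{lem:concentration}, using $\lambda$-strong convexity of $\hatRreg$ through the auxiliary function $N(\cdot)$ with $N(\hatwunderstarreg)\le N(\hatwunderstarminusireg)\le 0$, but now retaining the solution-norm bound above when estimating the gradient contribution of the removed point rather than crudely bounding it by $MK_m$, I expect to reach $\|\hatwunderstarreg-\hatwunderstarminusireg\|\le\frac{2}{n\lambda}\bigl[\sqrt{\tfrac{2\lambda L(0)}{n}\sum_j K(x_j,x_0,\sigma)}+MK_m\bigr]$, which is precisely the stated $\beta$ after multiplying by $MK_m$.

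Finally I would substitute $\beta$ and $M_1$ into Lemma~\ref{lem:bousquet_bound}, absorb the fixed-point Hoeffding deviation into the leading constants, and simplify with $K_m=\Theta(\sigma^{-d})$ and $\sum_j K(x_j,x_0,\sigma)=O(n\sigma^{-d})$ to recover the rate $\tilde O\!\bigl(\tfrac{1}{\sqrt n\,\lambda\sigma^{2d}}\bigr)$ of Theorem~\ref{thm:stochastic}. I expect the main obstacle to be the sharpened stability estimate: unlike the crude $(MK_m)$-only bound of Lemma~\ref{lem:concentration}, one must carry the solution-norm bound through the strong-convexity inequality to produce the $\sqrt{L(0)}$ factor, and one must check that the fixed-point deviation truly contributes only at the same $\sqrt{\ln(1/\delta)/2n}$ order so that nothing beyond the stated stability bound survives in the final inequality.
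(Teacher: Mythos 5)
Your three-way decomposition (generalization gap at $\hatwunderstarreg$, nonpositive optimality term, Hoeffding deviation at the fixed comparator $\wunderstarreg$) is sound, and it is in fact the strategy the paper itself uses inside the proof of Theorem 1 (steps (a)--(c) of the long display there). Executed carefully it proves the $\tilde{O}\bigl(\frac{1}{\sqrt{n}\lambda\sigma^{2d}}\bigr)$ rate of Theorem 6. But the paper's proof of the present theorem is different, and the difference is exactly what produces the stated forms of $\beta$ and $M_1$: the paper applies the Bousquet--Elisseeff lemma \emph{once}, to the shifted regularized loss $q(w,z)=\frac{\lambda}{2}\|w\|^2+L(y\langle w,x\rangle)K(x,x_0,\sigma)-\frac{\lambda}{2}\|\wunderstarreg\|^2-L(y\langle \wunderstarreg,x\rangle)K(x,x_0,\sigma)$. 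Since $\bbE_z\, q(w,z)=\Rreg(w)-\Rreg(\wunderstarreg)$ and $\frac{1}{n}\sum_i q(\hatwunderstarreg,z_i)=\hatRreg(\hatwunderstarreg)-\hatRreg(\wunderstarreg)\leq 0$ by optimality of the empirical minimizer, a single application of the stability bound yields the stated inequality with no separate Hoeffding step and no union bound (your route costs you a $\log(2/\delta)$ and a comparator-dependent constant involving $\|\wunderstarreg\|$, so it recovers the rate but not the literal statement). The stated $\beta$ and $M_1$ are then the uniform stability and sup-norm of $q$, into which the regularizer enters explicitly.

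This also exposes the one concrete error in your plan: you propose to obtain the $\sqrt{\frac{2\lambda L(0)}{n}}\sqrt{\sum_j K(x_j,x_0,\sigma)}$ term in $\beta$ by ``sharpening'' the perturbation bound on $\|\hatwunderstarreg-\hatwunderstarminusireg\|$, retaining the solution norm when estimating the removed point's gradient. That cannot work: the removed point's contribution to the strong-convexity inequality is $\frac{1}{n}\langle \partial L(y_i\langle\hatwunderstarreg,x_i\rangle)y_iK(x_i,x_0,\sigma)x_i,\;\hatwunderstarreg-\hatwunderstarminusireg\rangle$, whose norm is at most $\frac{MK_m}{n}\|\hatwunderstarreg-\hatwunderstarminusireg\|$ because $|\partial L|\leq 1$ and $\|x_i\|\leq M$; the solution norm never appears, and $\|\hatwunderstarreg-\hatwunderstarminusireg\|\leq \frac{2MK_m}{n\lambda}$ is already the right answer. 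In the paper the $\sqrt{L(0)}$ factor comes from elsewhere: $\partial q$ contains the regularizer gradient $\lambda\hatwunderstarminusireg$, and $\lambda\|\hatwunderstarminusireg\|\leq\sqrt{\frac{2\lambda L(0)}{n}\sum_j K(x_j,x_0,\sigma)}$ by comparing the empirical objective to its value at $w=0$. Similarly, the $\frac{L(0)}{n}\sum_j K(x_j,x_0,\sigma)$ term in $M_1$ is not a ``fixed-point empirical contribution'' as you guessed, but the bound on $\frac{\lambda}{2}\|\hatwunderstarreg\|^2$ inside $q$. In your decomposition the regularizer cancels within each bracket, so neither term should ever arise for you; your unregularized stability $\frac{2M^2K_m^2}{n\lambda}$ satisfies the stated $\beta$ trivially, and the sharpening you describe is chasing a term that your route does not generate and that no refinement of the perturbation bound can produce.
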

%%%%%%%%%%%%%%%%%%%%%%%%%%%%%%%%%%%%%%%%%%%%%%%%
  \begin{proof}
The proof is via stability arguments. Let $z\defeq (x,y)$. Consider the loss function 
\begin{multline}\label{eqn:defn_q}
  q(w,z)=\frac{\lambda}{2}||w||^2+L(y\langle w,x\rangle)K(x,x_0,\sigma)-\Bigl [\frac{\lambda}{2}||\wunderstarreg||^2+L(y\langle \wunderstarreg,x\rangle)K(x,x_0,\sigma)\Bigr].
\end{multline}
It is enough to bound the stability of LLSVM's w.r.t the above loss function and also upper bound the above loss. In order to upper bound the stability of LLSVM's it is enough to upper bound for all $S, (x,y)$ the quantity $|q(\hatwunderstarreg,(x,y))-q(\hatwunderstarminusireg,(x,y))|$, %=\frac{\lambda}{2}||\hatwunderstarreg||^2+L(y\langle \hatwunderstarreg,x\rangle)K(x,x_0,\sigma)-\\\frac{\lambda}{2}||\hatwunderstarminusireg||^2-L(y\langle \hatwunderstarminusireg,x\rangle)K(x,x_0,\sigma)
 where $S^{-i}$ is the dataset obtained from $S$ by deleting the point $(x_i,y_i)$, and $\langle\hatwunderstarminusireg,x\rangle$ is the LLSVM learnt at $x_0$ with $S^{-i}$.
From Equation~(\ref{eqn:defn_q}) it is clear that $q(w,z)$ is $\lambda$ strongly convex in $w$ in $L_2$ norm. Hence by strong convexity 
\begin{multline}\label{eqn:strong_convexity1}
  q(\hatwunderstarreg,z)\geq q(\hatwunderstarminusireg,z)+(\hatwunderstarreg-\hatwunderstarminusireg)^T\partial q(\hatwunderstarminusireg,z)+\\\frac{\lambda}{2}||\hatwunderstarreg-\hatwunderstarminusireg||^2
\end{multline}
Similarily we have 
\begin{multline}\label{eqn:strong_convexity2}
  q(\hatwunderstarminusireg,z)\geq q(\hatwunderstarreg,z)+(\hatwunderstarminusireg-\hatwunderstarreg)^T\partial q(\hatwunderstarreg,z)+\\\frac{\lambda}{2}||\hatwunderstarreg-\hatwunderstarminusireg||^2
\end{multline}
From equations ~(\ref{eqn:strong_convexity1},\ref{eqn:strong_convexity2}) we get
\begin{multline}\label{eqn:lower_bound_upper_bound}
 (\hatwunderstarminusireg-\hatwunderstarreg)^T\partial q(\hatwunderstarreg,z)+\frac{\lambda}{2}||\hatwunderstarreg-\hatwunderstarminusireg||^2 \leq\\ q(\hatwunderstarminusireg,z)-q(\hatwunderstarreg,z)\leq \\(\hatwunderstarminusireg-\hatwunderstarreg)^T\partial q(\hatwunderstarminusireg,z)-\frac{\lambda}{2}||\hatwunderstarreg-\hatwunderstarminusireg||^2
\end{multline}
We shall now upper and lower bound the rightmost and the leftmost terms respectively. Doing this will enable us to bound the stability. Differentiating Equation~(\ref{eqn:defn_q}) w.r.t $w$ we get
\begin{align}\label{eqn:derivative_q}
  \partial q(\hatwunderstarminusireg,z)=\lambda\hatwunderstarminusireg+\partial L (y\langle \hatwunderstarminusireg,x\rangle)yK(x,x_0,\sigma)x\\
   \partial q(\hatwunderstarreg,z)=\lambda\hatwunderstarreg+\partial L (y\langle \hatwunderstarreg,x\rangle)yK(x,x_0,\sigma)x
\end{align}
Now in order to bound the rightmost term of equation~(\ref{eqn:lower_bound_upper_bound}) we use equation~(\ref{eqn:derivative_q}) to get
\begin{multline}\label{eqn:chain_of_inequalities}
  (\hatwunderstarminusireg-\hatwunderstarreg)^T\partial q(\hatwunderstarminusireg,x)-\frac{\lambda}{2}||\hatwunderstarminusireg-\hatwunderstarreg||^2\leq\\ (\hatwunderstarminusireg-\hatwunderstarreg)^T\Bigl[\lambda\hatwunderstarminusireg+
  \partial L(y\langle \hatwunderstarminusireg\rangle)yK(x,x_0,\sigma)x\Bigr]\leq \\ \qquad\qquad\qquad||\hatwunderstarminusireg-\hatwunderstarreg||~ ||\lambda\hatwunderstarminusireg+\partial L(y\langle \hatwunderstarminusireg\rangle)yK(x,x_0,\sigma)x||
\end{multline}
where the last inequality follows from Cauchy-Schwartz inequality. We shall begin by bounding $||\hatwunderstarminusireg-\hatwunderstarreg||$. Now by the definition of $\hatwunderstarminusireg,\hatwunderstarreg$ we get
\begin{align}
\lambda\hatwunderstarreg+\frac{1}{n}\sum_{j=1}^n L(y_j\langle\hatwunderstarreg,x_j \rangle)K(x_j,x_0,\sigma)y_jx_j=0\label{eqn:derivative_equations1}\\
\lambda\hatwunderstarminusireg+\frac{1}{n}\sum_{\substack{j=1\\j\neq i}}^n L(y_j\langle\hatwunderstarreg,x_j \rangle)K(x_j,x_0,\sigma)y_jx_j=0\label{eqn:derivative_equations2}
\end{align}
Now consider the following convex optimization problem
\begin{multline}
N(w)=\frac{\lambda}{2}||w-\hatwunderstarminusireg||^2+\frac{1}{n}\langle  \sum_{j=1}^n \partial L(y_j\langle w,x_j \rangle)K(x_j,x_0,\sigma)y_jx_j- \\\sum_{\substack{j=1\\j\neq i}}^n \partial L(y_j\langle\hatwunderstarreg,x_j \rangle)K(x_j,x_0,\sigma)y_jx_j,w-\hatwunderstarminusireg\rangle
\end{multline}
  \end{proof}
  It is trivial to verify using equations~(\ref{eqn:derivative_equations1},\ref{eqn:derivative_equations2}) that $\frac{\partial N(\hatwunderstarreg)}{\partial w}=0$, and hence from convex analysis we know that $\hatwunderstarreg$ is the optimal solution of the convex optimization problem $N(w)$. Also $N(\hatwunderstarreg)\leq N(\hatwunderstarminusireg)= 0$. Hence we get 
  \begin{multline}
  \frac{\lambda}{2}||\hatwunderstarreg-\hatwunderstarminusireg||^2\leq\\
  \frac{-1}{n}\sum_{\substack{j=1\\j\neq i}}\langle\partial
  L(y_j\langle\hatwunderstarreg,x_j\rangle)y_jK(x_j,x_0,\sigma)x_j\\
  \qquad \qquad \qquad\qquad\qquad-\partial L(y_j\langle\hatwunderstarminusireg,x_j\rangle)y_jK(x_j,x_0,\sigma)x_j,
  \hatwunderstarreg-\hatwunderstarminusireg\rangle\\\qquad \qquad\qquad\quad-\frac{1}{n}\langle \partial L(y_i\langle \hatwunderstarreg,x_i\rangle)y_iK(x_i,x_0,\sigma)x_i,\hatwunderstarreg-\hatwunderstarminusireg\rangle\leq\\ \qquad\qquad\qquad\qquad -\frac{1}{n}\langle \partial L(y_i\langle \hatwunderstarreg,x_i\rangle)y_iK(x_i,x_0,\sigma)x_i,\hatwunderstarreg-\hatwunderstarminusireg\rangle \leq \\\frac{1}{n}MK(x_i,x_0,\sigma)||\hatwunderstarreg-\hatwunderstarminusireg||
  \end{multline}
  where the second inequality is due to the fact that $L(\cdot)$ is a
  convex loss function, and hence $(dL(b)-dL(a))(b-a)\geq 0$, and the last inequality due to Cauchy-Schwartz and the fact that $L(\cdot)$ is $1$ Lipschitz. Hence we get 
  \begin{equation}\label{eqn:diff_norm_bound}
  ||\hatwunderstarminusireg-\hatwunderstarreg||\leq \frac{2}{n\lambda}MK(x_i,x_0,\sigma).
  \end{equation}
  Finally we have by the optimality of $\hatwunderstarminusireg,\hatwunderstarreg$
  \begin{align}
  \label{eqn:norm_bound}
  \frac{\lambda}{2}||\hatwunderstarminusireg||^2\leq \frac{1}{n}\sum_{\substack{j=1\\j\neq i}}^n L(0)K(x_i,x_0,\sigma)\\
  \frac{\lambda}{2}||\hatwunderstarreg||^2\leq \frac{1}{n}\sum_{j=1}^n L(0)K(x_j,x_0,\sigma)
  \end{align}
Using equations~(\ref{eqn:lower_bound_upper_bound},\ref{eqn:derivative_q},\ref{eqn:chain_of_inequalities},\ref{eqn:diff_norm_bound},\ref{eqn:norm_bound}) we get 
\begin{multline}\label{eqn:lower_bound_q}
  q(\hatwunderstarminusireg,z)-q(\hatwunderstarreg,z)\leq \\\frac{2MK(x_i,x_0,\sigma)}{n\lambda}\Bigl[\sqrt{\frac{2\lambda L(0)}{n}}\sqrt{\sum_{\substack{j=1\\j\neq i}}^n K(x_j,x_0,\sigma)}+MK(x,x_0,\sigma)\Bigr]
\end{multline}
  One can use similar techniques to lower bound the leftmost term in Equation~(\ref{eqn:lower_bound_upper_bound}) to get 
    \begin{multline}\label{eqn:upper_bound_q}
      q(\hatwunderstarminusireg,z)-q(\hatwunderstarreg,z)\geq\\ 
      -\frac{2M}{n\lambda}K(x_i,x_0,\sigma)\Bigl[\sqrt{\frac{2\lambda L(0)}{n}}\sqrt{\sum_{j=1}^n K(x_j,x_0,\sigma)}+MK(x,x_0,\sigma)\Bigr]
    \end{multline}
  Using the fact that $\beta=\sup_{S,z} |q(\hatwunderstarreg,z)-q(\hatwunderstarreg,z)|$ and equations~(\ref{eqn:lower_bound_q},\ref{eqn:upper_bound_q}) we get 
  \begin{equation}
  \beta\leq \frac{2MK_{m}}{n\lambda}\Bigl[ \sqrt{\frac{2\lambda L(0)}{n}}\sqrt{\sum_{j=1}^n K(x_j,x_0,\sigma)}+MK_{m}\Bigr]
  \end{equation}
  In order to apply theorem~(\ref{lem:bousquet_bound}) it is enough to  upper bound $q(\hatwunderstarreg,z)$. We have
  \begin{multline}
q(\hatwunderstarreg,z)\leq  \frac{\lambda}{2}||\hatwunderstarreg||^2+L(y\langle \hatwunderstarreg,x\rangle)K(x,x_0,\sigma)\leq \\\frac{L(0)}{n}\sum_{j=1}^n K(x_j,x_0,\sigma)+ L(y\langle \hatwunderstarreg,x\rangle)K(x,x_0,\sigma)\leq\\ \frac{L(0)}{n}\sum_{j=1}^n K(x_j,x_0,\sigma)+L(0)K_m+K_mM\sqrt{\frac{2L(0)}{n\lambda}\sum_{j=1}^n K(x_j,x_0,\sigma)}.
    \end{multline}
  Now applying theorem~(\ref{lem:bousquet_bound}) to LLSVM's with the
  loss function $q(A_s,z)$ and since $\hatRreg(\hatwunderstarreg)\leq
  \hatRreg(\wunderstarreg)$ we get the desired result.

\begin{theorem}\label{thm:risk_bound_for_llsvm}
      %Let  $\hatfLlambdasigma(x)\defeq \langle \hatwLxlambdasigma,x\rangle$ denote the LLSVMs learnt on a dataset $S$ sampled from the underlying distribution $\cD$ .
   Let $\hatwunderstarreg(x)$ be the solution obtained by solving the
   LLSVM problem at $x$. With probability at least $1-\delta$ over the random sample for an LLSVM, we have
      \begin{equation*}
	\bbE L(y\langle\hatwunderstarreg(x),x\rangle)\leq \frac{1}{n}\sum_{i=1}^n L(y_i\langle \hatwunderstarreg(x_i),x_i\rangle)+\frac{4M^2}{n\lambda\sigma^d}+\Bigl(1+O(M\sqrt{1/\lambda\sigma^d})\Bigr)\sqrt{\frac{\mathrm{ln}(1/\delta)}{2n}}.
      \end{equation*}
    \end{theorem}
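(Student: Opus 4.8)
The plan is to recognize the global LLSVM classifier as a single learning algorithm and invoke the uniform-stability generalization bound of Lemma~\eqref{lem:bousquet_bound}. Given a training set $S$, I would define the hypothesis $A_S$ to be the map $x\mapsto \la \hatwunderstarreg(x),x\ra$, where $\hatwunderstarreg(x)$ is the weight vector obtained by solving the LLSVM problem~\eqref{eqn:llsvm_opt} with the smoothing kernel centered at the evaluation point $x$ itself, and take the loss to be $L(A_S,z)\defeq L(y\la \hatwunderstarreg(x),x\ra)$ for $z=(x,y)$. The structural point that makes the exponent of $\sigma$ here equal to $d$ rather than $2d$ is that this loss carries \emph{no} smoothing-kernel factor; the kernel enters only through the definition of $\hatwunderstarreg(x)$. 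Evaluating $L(A_S,\cdot)$ at the training points reproduces the empirical average $\frac{1}{n}\sum_i L(y_i\la\hatwunderstarreg(x_i),x_i\ra)$, and $\bbE_z L(A_S,z)$ is the left-hand side of the claim, so it suffices to feed a stability parameter $\beta$ and a loss bound $M_1$ into Lemma~\eqref{lem:bousquet_bound}.

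First I would bound the uniform stability. Fix $S$, an index $i$, and an arbitrary $z=(x,y)$. Since the hinge loss is $1$-Lipschitz and $\|x\|\leq M$, we have $|L(A_S,z)-L(A_{S^{-i}},z)|\leq M\,\|\hatwunderstarreg(x)-\hatwunderstarminusireg(x)\|$. The weight-perturbation estimate $\|\hatwunderstarreg-\hatwunderstarminusireg\|\leq 2MK_m/(n\lambda)$ derived in equation~\eqref{eqn:bound_on_w} was obtained for an arbitrary center and does not depend on it, so I may apply it with the center set equal to $x$. With $K_m=\Theta(1/\sigma^d)$ this gives $\beta=O(M^2K_m/(n\lambda))=O(M^2/(n\lambda\sigma^d))$; crucially only a single factor of $K_m$ appears, in contrast to Lemma~\eqref{lem:concentration}, precisely because the loss is unweighted.

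Next I would bound the range of the loss to obtain $M_1$. Substituting $w=0$ into the LLSVM objective gives $\frac{\lambda}{2}\|\hatwunderstarreg(x)\|^2\leq \frac{1}{n}\sum_j L(0)K(x_j,x,\sigma)\leq L(0)K_m=O(1/\sigma^d)$ uniformly in the center, hence $\|\hatwunderstarreg(x)\|=O(1/\sqrt{\lambda\sigma^d})$. The $1$-Lipschitzness of the hinge loss together with Cauchy--Schwarz then yields $0\leq L(y\la\hatwunderstarreg(x),x\ra)\leq L(0)+M\|\hatwunderstarreg(x)\|=1+O(M\sqrt{1/\lambda\sigma^d})$, so I would take $M_1=1+O(M\sqrt{1/\lambda\sigma^d})$.

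Finally I would substitute $\beta=O(M^2/(n\lambda\sigma^d))$ and $M_1=1+O(M\sqrt{1/\lambda\sigma^d})$ into Lemma~\eqref{lem:bousquet_bound}: the term $2\beta$ supplies the $\frac{4M^2}{n\lambda\sigma^d}$ correction, and the factor $4n\beta+M_1$ multiplying $\sqrt{\ln(1/\delta)/(2n)}$ gives the deviation term of the stated order, completing the bound. In my view the only genuine obstacle is the modeling step: one must identify that the right object for the stability machinery is the \emph{global} map $x\mapsto\la\hatwunderstarreg(x),x\ra$ equipped with an \emph{unweighted} loss, and then observe that the center-independence of equation~\eqref{eqn:bound_on_w} is exactly what licenses reusing the pointwise stability estimate uniformly over all evaluation points; once this is in place the remaining steps are direct instantiations of Lemma~\eqref{lem:bousquet_bound}.
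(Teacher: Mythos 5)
Your proposal is correct and follows essentially the same route as the paper's own proof: both treat the global map $x\mapsto\la\hatwunderstarreg(x),x\ra$ with the unweighted hinge loss, reuse the center-independent perturbation bound $||\hatwunderstarreg-\hatwunderstarminusireg||\leq 2MK_m/(n\lambda)$ from Lemma~\eqref{lem:concentration} to get $\beta=O(M^2/(n\lambda\sigma^d))$, bound the loss range by $M_1=1+O(M\sqrt{1/\lambda\sigma^d})$ via $||\hatwunderstarreg(x)||=O(1/\sqrt{\lambda\sigma^d})$, and plug into Lemma~\eqref{lem:bousquet_bound}. Your write-up is in fact more explicit than the paper's about why only a single factor of $K_m$ enters the stability constant (the loss carries no smoothing-kernel weight), but the argument is the same.
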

    \begin{proof}
      %Let $\hatfminusiLlambdasigma(x)\defeq \langle \hatwunderstarminusireg,x\rangle$ be the classifier learnt by LLSVMs using the dataset $S-\{(x_i,y_i)\}$, and $\hatfLlambdasigma(x)\defeq\langle \hatwunderstarreg,x\rangle$. Let $L(\hatfLlambdasigma(x),(x,y))\defeq L(y\hatfLlambdasigma(x))$. 
      By lemma~\eqref{lem:bousquet_bound} we are done if we can upper bound the loss suffered by LLSVMs at any point, and the stability of LLSVMs w.r.t the loss $L(y\langle\hatwunderstarreg(x),x\rangle)$. We have $|L(y\langle\hatwunderstarreg(x),x\rangle)-L(y\langle\hatwunderstarminusireg(x),x\rangle)|= O\left(2M^2/n\lambda\sigma^d\right)$,
      where we used the upper bound on
      $||\hatwunderstarreg(x)-\hatwunderstarminusireg(x)||$ presented in
      Equation 9 of Lemma 5 in the main paper. Finally  $L(y\langle\hatwunderstarreg(x),x\rangle)\leq 1+M||\hatwunderstarreg(x)||\leq 1+O(M\sqrt{1/\lambda\sigma^d})$. Apply lemma~\eqref{lem:bousquet_bound} with $\beta=O(2M^2/n\lambda\sigma^d)$ and $M_1=1+O(M\sqrt{1/\lambda\sigma^d})$ to finish the proof.
    \end{proof}
\section{Discussion and Open Problems}
    \vspace{-5pt}
    Our results guarantee that the decision of an LLSVM learnt at $x_0$ matches that of the Bayes classifier after having seen enough data. An important open problem is to establish global Bayes consistency of LLSVMs. It is not clear to us if the pointwise consistency result can be used to do so. Theorem~\eqref{thm:risk_bound_for_llsvm} currently does not exploit our large margin formulation. A natural extension of this theorem would be to establish a result that depends on some kind of a local notion of margin. Our current results depend on the dimensionality of the ambient space. It should be possible, under appropriate manifold assumptions~\cite{ozakin2009submanifold},~\cite{yu2009nonlinear} to improve this dependency to use the intrinsic dimension.
\bibliographystyle{unsrt}
\footnotesize{\bibliography{../../all}}

\begin{thebibliography}{10}

\bibitem{hastie2003esl}
T.~Hastie, R.~Tibshirani, and J.~H. Friedman.
\newblock {\em The Elements of Statistical Learning}.
\newblock Springer, July 2003.

\bibitem{boucheron2005theory}
S.~Boucheron, O.~Bousquet, and G.~Lugosi.
\newblock {Theory of classification: A survey of some recent advances}.
\newblock {\em ESAIM: P\&S}, 9:323--375, 2005.

\bibitem{vapnik1998statistical}
V.N. Vapnik.
\newblock {\em {Statistical learning theory}}.
\newblock Wiley New York, 1998.

\bibitem{zhang2006svm}
H.~Zhang, A.C. Berg, M.~Maire, and J.~Malik.
\newblock Svm-knn: Discriminative nearest neighbor classification for visual
  category recognition.
\newblock In {\em Computer Vision and Pattern Recognition, 2006 IEEE Computer
  Society Conference on}, volume~2, pages 2126--2136. IEEE, 2006.

\bibitem{blanzieri2006adaptive}
E.~Blanzieri and F.~Melgani.
\newblock An adaptive svm nearest neighbor classifier for remotely sensed
  imagery.
\newblock In {\em Geoscience and Remote Sensing Symposium, 2006. IGARSS 2006.
  IEEE International Conference on}, pages 3931--3934. IEEE, 2006.

\bibitem{segata2010fast}
N.~Segata and E.~Blanzieri.
\newblock {Fast and scalable local kernel machines}.
\newblock {\em JMLR}, 2010.

\bibitem{cheng2009efficient}
H.~Cheng, P.N. Tan, and R.~Jin.
\newblock {Efficient Algorithm for Localized Support Vector Machine}.
\newblock {\em IEEE Transactions on Knowledge and Data Engineering}, pages
  537--549, 2009.

\bibitem{gray_nbody}
A.~Gray and A.~W. Moore.
\newblock N-body problems in statistical learning.
\newblock In Todd~K. Leen, Thomas~G. Dietterich, and Volker Tresp, editors,
  {\em Advances in Neural Information Processing Systems 13 (December 2000)}.
  MIT Press, 2001.

\bibitem{tsybakov2009introduction}
A.B. Tsybakov.
\newblock {\em {Introduction to nonparametric estimation}}.
\newblock Springer Verlag, 2009.

\bibitem{devroye1996ptp}
Luc Devroye, L.~Gy$\ddot{o}$rfi, and G.~Lugosi.
\newblock {\em {A Probabilistic Theory of Pattern Recognition}}.
\newblock Springer, 1996.

\bibitem{kim2005locally}
T.K. Kim and J.~Kittler.
\newblock Locally linear discriminant analysis for multimodally distributed
  classes for face recognition with a single model image.
\newblock {\em Pattern Analysis and Machine Intelligence, IEEE Transactions
  on}, 27(3):318--327, 2005.

\bibitem{yan2004discriminant}
S.~Yan, H.~Zhang, Y.~Hu, B.~Zhang, and Q.~Cheng.
\newblock Discriminant analysis on embedded manifold.
\newblock {\em Computer Vision-ECCV 2004}, pages 121--132, 2004.

\bibitem{dai2006locally}
J.~Dai, S.~Yan, X.~Tang, and J.T. Kwok.
\newblock Locally adaptive classification piloted by uncertainty.
\newblock In {\em ICML}, pages 225--232. ACM, 2006.

\bibitem{dekelthere}
O.~Dekel and O.~Shamir.
\newblock There’s a hole in my data space: Piecewise predictors for
  heterogeneous learning problems.
\newblock In {\em AISTATS}, 2012.

\bibitem{yang1999minimax}
Y.~Yang.
\newblock {Minimax nonparametric classification. I. Rates of convergence}.
\newblock {\em IEEE Transactions on Information Theory}, 45(7):2271--2284,
  1999.

\bibitem{steinwart2005consistency}
I.~Steinwart.
\newblock {Consistency of support vector machines and other regularized kernel
  classifiers}.
\newblock {\em IEEE Transactions on Information Theory}, 51(1):128--142, 2005.

\bibitem{zhang2004statistical}
T.~Zhang.
\newblock {Statistical behavior and consistency of classification methods based
  on convex risk minimization}.
\newblock {\em Annals of Statistics}, 32(1), 2004.

\bibitem{bartlett2006convexity}
P.L. Bartlett, M.I. Jordan, and J.D. McAuliffe.
\newblock Convexity, classification, and risk bounds.
\newblock {\em Journal of the American Statistical Association},
  101(473):138--156, 2006.

\bibitem{bousquet2002stability}
O.~Bousquet and A.~Elisseeff.
\newblock {Stability and generalization}.
\newblock {\em The Journal of Machine Learning Research}, 2:499--526, 2002.

\bibitem{sridharan2008fast}
Karthik Sridharan, Shai Shalev-Shwartz, and Nathan Srebro.
\newblock Fast rates for regularized objectives.
\newblock In {\em Advances in Neural Information Processing Systems}, pages
  1545--1552, 2008.

\bibitem{ozakin2009submanifold}
A.~Ozakin and A.~Gray.
\newblock Submanifold density estimation.
\newblock {\em Advances in Neural Information Processing Systems}, 22, 2009.

\bibitem{yu2009nonlinear}
K.~Yu, T.~Zhang, and Y.~Gong.
\newblock Nonlinear learning using local coordinate coding.
\newblock {\em Advances in Neural Information Processing Systems}, 2009.

\end{thebibliography}
\end{document}